\documentclass{article} %
\usepackage{iclr2024_conference,times}

\usepackage[colorlinks=true, citecolor=teal]{hyperref} 
\usepackage{url}

\usepackage{booktabs}       %
\usepackage{amsfonts}       %
\usepackage{nicefrac}       %
\usepackage{microtype}      %
\usepackage{xcolor}         %

\usepackage{float}
\usepackage{amsmath}

\usepackage{graphicx}
\usepackage{amsthm}
\usepackage{amssymb}
\usepackage{enumerate}
\usepackage{enumitem}
\usepackage{algorithm} 
\usepackage{algpseudocode} 
\usepackage{multirow}
\usepackage{subcaption}
\usepackage{wrapfig}
\usepackage{array}

\definecolor{darkspringgreen}{rgb}{0.13, 0.55, 0.13}
\definecolor{frenchrose}{rgb}{0.96, 0.29, 0.54}

\newtheorem{assumption}{Assumption}
\newtheorem{theorem}{Theorem}

\newtheorem{lemma}[theorem]{Lemma}

\renewcommand{\leq}{\leqslant}
\renewcommand{\geq}{\geqslant}

\title{Adaptive Federated Learning \\ with Auto-Tuned Clients}

\author{Junhyung Lyle Kim$^\star$\thanks{This paper extends \cite{kim2023adaptive} presented at the ICML 2023 Federated Learning Workshop.}~~Mohammad Taha Toghani$^\dagger$~~C\'{e}sar A. Uribe$^\dagger$~~\&~~Anastasios Kyrillidis$^\star$ 
\\
$^\star$Department of Computer Science, $^\dagger$Department of Electrical and Computer Engineering\\
Rice University, Houston, TX 77005, USA \\
\texttt{\{jlylekim, mttoghani, cauribe, anastasios\}@rice.edu} 
}

\iclrfinalcopy %
\begin{document}

\maketitle

\begin{abstract}
Federated learning (FL) is a distributed machine learning framework where the global model of a central server is trained via multiple collaborative steps by participating clients without sharing their data. While being a flexible framework, where the distribution of local data, participation rate, and computing power of each client can greatly vary, such flexibility gives rise to many new challenges, especially in the hyperparameter tuning on the client side. We propose $\Delta$-SGD, a simple step size rule for SGD that enables each client to use its own step size by adapting to the local smoothness of the function each client is optimizing. We provide theoretical and empirical results where the benefit of the client adaptivity is shown in various FL scenarios.
\end{abstract}

\section{Introduction}
\label{sec:intro}

Federated Learning (FL) is a machine learning framework that enables multiple clients to collaboratively learn a global model in a distributed manner. Each client trains the model on their local data, and then sends only the updated model parameters to a central server for aggregation.
Mathematically, FL aims to solve the following optimization problem: 
\begin{equation} \label{eq:obj}
    \min_{x \in \mathbb{R}^d} f(x) := \frac{1}{m} \sum_{i=1}^m f_i(x),
\end{equation}
where $f_i(x) := \mathbb{E}_{z \sim \mathcal{D}_i} [F_i (x, z)]$ is the loss function of the $i$-th client, and $m$ is the number of clients. 

A key property of FL its flexibility in how various clients participate in the overall training procedure. The number of clients, their participation rates, and computing power available to each client can vary and change at any time during the training. Additionally, the local data of each client is not shared with others, resulting in better data privacy
\citep{mcmahan2017communication, agarwal2018cpsgd}.

While advantageous, such flexibility also introduces a plethora of new challenges, notably: $i)$ how the server aggregates the local information coming from each client, and $ii)$ how to make sure each client meaningfully ``learns'' using their local data and computing device. The first challenge was partially addressed in \citet{reddi2021adaptive}, where adaptive optimization methods such as Adam \citep{kingma2014adam} was utilized in the aggregation step. 
Yet, the second challenge remains largely unaddressed. 

Local data of each client is not shared, which intrinsically introduces heterogeneity in terms of the size and the distribution of local datasets. That is, $\mathcal{D}_i$ differs for each client $i$, as well as the number of samples $z \sim \mathcal{D}_i$. 
Consequently, $f_i(x)$ can vastly differ from client to client, making the problem in \eqref{eq:obj} hard to optimize. 
Moreover, the sheer amount of local updates is far larger than the number of aggregation steps, due to 
the high communication cost---typically 3-4$\times$ orders of magnitude more expensive than local computation---in distributed settings \citep{lan2020communication}.

\begin{figure}
    \centering
    \includegraphics[scale=0.32]{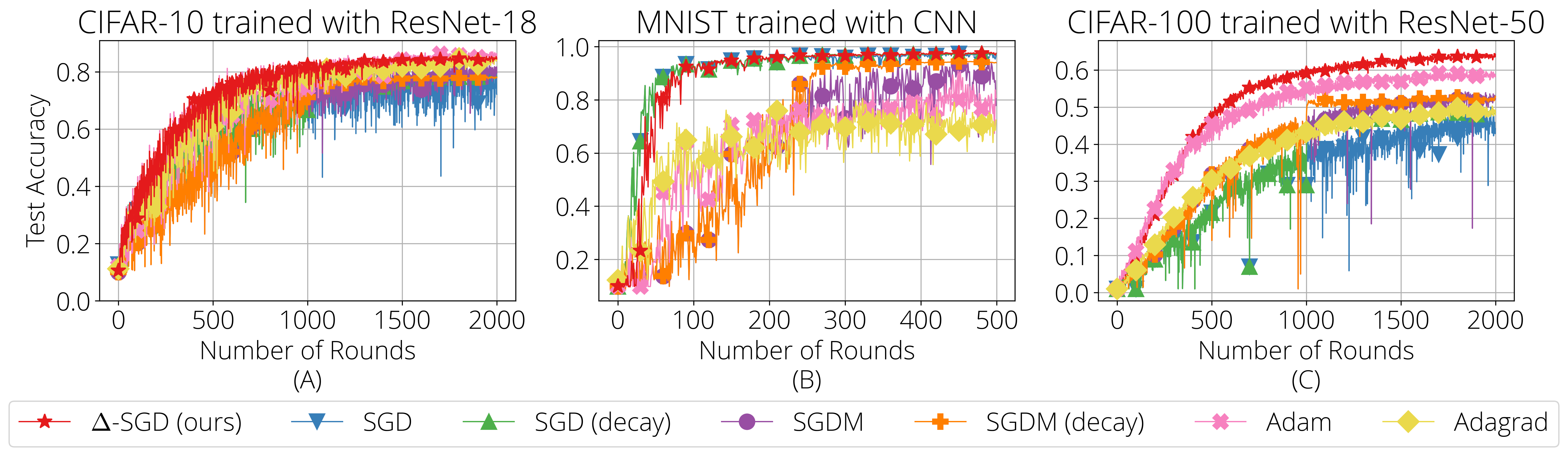}
    \caption{\textit{Illustration of the effect of not properly tuning the client step sizes.} In (A), each client optimizer uses the best step size from grid-search. Then, the same step size from (A) is intentionally used in settings (B) and (C). Only $\Delta$-SGD works well across all settings without additional tuning.}
    \label{fig:intro-accuracy}
\end{figure}
As a result, extensive fine-tuning of the client optimizers is often required to achieve good performance in FL scenarios. 
For instance, experimental results of the well-known \texttt{FedAvg} algorithm were obtained after performing a grid-search of typically 11-13 step sizes of the clients' SGD \citep[Section 3]{mcmahan2017communication}, as SGD (and its variants) are highly sensitive to the step size \citep{toulis2017asymptotic, assran2020convergence, kim2022convergence}. 
Similarly, in \citet{reddi2021adaptive}, 6 different client step sizes were grid-searched for different tasks, and not surprisingly, each task requires a different client step size to obtain the best result, regardless of the server-side adaptivity \citep[Table 8]{reddi2021adaptive}. Importantly, even these ``fine-tunings'' are done under the setting that \textit{all clients use the same step size}, which is sub-optimal, given that $f_i$ can be vastly different per client; we analyze this further in Section~\ref{sec:adaptive-stepsize}. 

\textbf{Initial examination as motivation.} 
The implication of not properly tuning the client optimizer is highlighted in Figure~\ref{fig:intro-accuracy}. 
We plot the progress of test accuracies for different client optimizers, where, for all the other test cases, we intentionally use \textit{the same step size rules that were fine-tuned for the task in Figure~\ref{fig:intro-accuracy}(A)}. 
There, we train a ResNet-18 for CIFAR-10 dataset classification within an FL setting, where the best step sizes for each (client) optimizer was used after grid-search; we defer the experimental details to Section~\ref{sec:experiments}. 
Hence, all methods perform reasonably well, although $\Delta$-SGD, our proposed method, achieves noticeably better test accuracy when compared to non-adaptive SGD variants --e.g., a 5\% difference in final classification accuracy from SGDM-- and comparable final accuracy only with adaptive SGD variants, like Adam and Adagrad.   

In Figure~\ref{fig:intro-accuracy}(B), we train a shallow CNN for MNIST classification, using the \textit{same} step sizes from (A).
MNIST classification is now considered an ``easy'' task, and therefore SGD with the same constant and decaying step sizes from Figure~\ref{fig:intro-accuracy}(A) works well. However, with momentum, SGDM exhibits highly oscillating behavior, which results in slow progress and poor final accuracy, especially without decaying the step size. 
Adaptive optimizers, like Adam and Adagrad, show similar behavior, falling short in achieving good final accuracy, compared to their performance in the case of Figure~\ref{fig:intro-accuracy}(A). 

Similarly, in Figure~\ref{fig:intro-accuracy}(C), we plot the test accuracy progress for CIFAR-100 classification trained on a ResNet-50, again using the same step size rules as before. Contrary to Figure~\ref{fig:intro-accuracy}(B), SGD with momentum (SGDM) works better than SGD, both with the constant and the decaying step sizes. 
Adam becomes a ``good optimizer'' again, but its ``sibling,'' Adagrad, performs worse than SGDM. 
On the contrary, our proposed method, $\Delta$-SGD, which we introduce in Section~\ref{sec:adsgd}, achieves superior performance in all cases without any additional tuning.  

The above empirical observations beg answers to important and non-trivial questions in training FL tasks using variants of SGD methods as the client optimizer: 
\textit{Should the momentum be used? Should the step size be decayed? If so, when?} Unfortunately, Figure~\ref{fig:intro-accuracy} indicates that the answers to these questions highly vary depending on the setting; once the dataset itself or how the dataset is distributed among different clients changes, or once the model architecture changes, the client optimizers have to be properly re-tuned to ensure good performance. Perhaps surprisingly, the same holds for adaptive methods like Adagrad \citep{duchi2011adaptive} and Adam \citep{kingma2014adam}.
\textbf{Our hypothesis and contributions.} 
Our paper takes a stab in this direction: we propose DELTA-SGD (\textbf{D}istribut\textbf{E}d \textbf{L}ocali\textbf{T}y \textbf{A}daptive SGD), a simple adaptive distributed SGD scheme, that can automatically tune its step size based on the available local data.
We will refer to our algorithm as $\Delta$-SGD in the rest of the text. 
Our contributions can be summarized as follows:  
\vspace{-0.2cm}
\begin{itemize}[leftmargin=*]
    \item 
    We propose $\Delta$-SGD, which has two implications in FL settings: 
    $i)$ each client can use its own step size, and $ii)$ each client's step size \textit{adapts to the local smoothness of $f_i$} --hence LocaliTy Adaptive-- and can even \textit{increase} during local iterations. Moreover, due to the simplicity of the proposed step size, $\Delta$-SGD is \textit{agnostic to the loss function and the server optimizer}, and thus can be combined with methods that use different loss functions such as FedProx \citep{li2020federated} or MOON \citep{li2021model}, or adaptive server methods such as FedAdam \citep{reddi2021adaptive}.
    \item We provide convergence analysis of $\Delta$-SGD in a general nonconvex setting (Theorem~\ref{thm:nonconvex}). We also prove convergence in convex setting; due to space constraints, we only state the result of the general nonconvex case in Theorem~\ref{thm:nonconvex}, and defer other theorems and proofs to Appendix~\ref{apdx:missing-proofs}.
    \item We evaluate our approach on several benchmark datasets and demonstrate that $\Delta$-SGD achieves superior performance compared to other state-of-the-art FL methods. Our experiments show that $\Delta$-SGD can effectively adapt the client step size to the underlying local data distribution, and achieve convergence of the global model \textit{without any additional tuning}. Our approach can help overcome the client step size tuning challenge in FL and enable more efficient and effective collaborative learning in distributed systems.
\end{itemize}

\section{Preliminaries and Related Work}
\label{sec:prelim}

\textbf{A bit of background on optimization theory.}
Arguably the most fundamental optimization algorithm for minimizing a function $f(x): \mathbb{R}^d \rightarrow \mathbb{R}$ is the gradient descent (GD), which iterates with the step size $\eta_t$ as:
    $x_{t+1} = x_t - \eta_t \nabla f(x_t)$.
Under the assumption that $f$ is \textit{globally} $L$-smooth, that is:
\begin{equation} \label{eq:L-smooth}
    \| \nabla f(x) - \nabla f(y) \| \leq L \cdot \| x - y \| \quad \forall x, y \in \mathbb{R}^d,
\end{equation} 
the step size $\eta_t = \frac{1}{L}$ is the ``optimal'' step size for GD, which converges for convex $f$ at the rate:
\begin{equation} \label{eq:gd-rate-smooth}
    f(x_{t+1}) - f(x^\star) \leq \frac{L \|x_0 - x^\star \|^2}{ 2 (2t+1)}.
\end{equation}

\noindent
\textbf{Limitations of popular step sizes.}
\label{sec:adaptive-stepsize}
In practice, however, the constants like $L$ 
are rarely known \citep{boyd2004convex}. As such, there has been a plethora of efforts in the optimization community to develop a universal step size rule or method that does not require a priori knowledge of problem constants so that an optimization algorithm can work as ``plug-and-play'' \citep{nesterov2015universal, orabona2016coin, levy2018online}. 
A major lines of work on this direction include: $i)$ line-search \citep{armijo1966minimization, paquette2020stochastic}, $ii)$ adaptive learning rate (e.g., Adagrad \citep{duchi2011adaptive} and Adam \citep{kingma2014adam}), and $iii)$ Polyak step size \citep{polyak1969minimization}, to name a few. 

Unfortunately, the pursuit of finding the ``ideal'' step size is still active \citep{loizou2021stochastic, defazio2023learning, bernstein2023automatic}, as the aforementioned approaches have limitations.
For instance, to use line search, solving a sub-routine is required, inevitably incurring additional evaluations of the function and/or the gradient. 
To use methods like Adagrad and Adam, the knowledge of $D$, the distance from the initial point to the solution set, is required to ensure good performance \citep{defazio2023learning}. Similarly, Polyak step size requires knowledge of $f(x^\star)$ \citep{hazan2019revisiting}, which is not always known a priori, and is unclear what value to use for an arbitrary loss function and model; see also Section~\ref{sec:experiments} where not properly estimating $D$ or $f(x^\star)$ resulting in suboptimal performance of relevant client optimizers in FL scenarios.

\textbf{Implications in distributed/FL scenarios.}
The global $L$-smoothness in \eqref{eq:L-smooth} needs to be satisfied for all $x$ and $y$, implying even finite $L$ can be arbitrarily large. This results in a small step size, leading to slow convergence, as seen in~\eqref{eq:gd-rate-smooth}.  \citet{malitsky2019adaptive} attempted to resolve this issue by proposing a step size for (centralized) GD that depends on the \textit{local smoothness} of $f$, which by definition is smaller than the global smoothness constant $L$ in \eqref{eq:L-smooth}. $\Delta$-SGD is inspired from this work.

Naturally, the challenge is aggravated in the distributed case.
To solve \eqref{eq:obj} under the assumption that each $f_i$ is $L_i$-smooth, the step size of the form ${1}/{L_\text{max}}$ where ${L_\text{max}} := \max_i L_i$ is often used \citep{yuan2016convergence, scaman2017optimal, qu2019accelerated}, to ensure convergence \citep{uribe2020dual}.
Yet, one can easily imagine a situation where $L_i \gg L_j$ for $i \neq j,$ in which case the convergence of $f_j(x)$ can be arbitrarily slow by using step size of the form ${1}/{L_\text{max}}$.

Therefore, to ensure that each client learns useful information in FL settings as in \eqref{eq:obj}, ideally:
$i)$ each agent should be able to use its own step size, instead of crude ones like $1/L_{\text{max}}$ for all agents; $ii)$ the individual step size should be ``locally adaptive'' to the function $f_i,$ (i.e., even $1/L_i$ can be too crude, analogously to the centralized GD); and $iii)$ the step size should not depend on problem constants like $L_i$ or $\mu_i$, which are often unknown. 
In Section~\ref{sec:adsgd}, we introduce \textbf{D}istribut\textbf{E}d \textbf{L}ocali\textbf{T}y \textbf{A}daptive SGD ($\Delta$-SGD), which satisfies all of the aforementioned desiderata. 

\textbf{Related work on FL.}
The FL literature is vast; here, we focus on the results that closely relate to our work. The FedAvg algorithm \citep{mcmahan2017communication} is one of the simplest FL algorithms, which averages client parameters after some local iterations. \citet{reddi2021adaptive} showed that FedAvg is a special case of a meta-algorithm, where both the clients and the server use SGD optimizer, with the server learning rate being 1. To handle data heterogeneity and model drift, they proposed using adaptive optimizers \textit{at the server}. 
This results in algorithms such as \
FedAdam and FedYogi, which respectively use the Adam \citep{kingma2014adam} and Yogi \citep{zaheer2018adaptive} as server optimizers. 
Our approach is orthogonal \textit{and} complimentary to this, as we propose a \textit{client adaptive optimizer} that can be easily combined with these server-side aggregation methods (c.f., Appendix~\ref{sec:fedadam-additional}). 

Other approaches have handled the heterogeneity by changing the loss function. 
FedProx \citep{li2020federated} adds an $\ell_2$-norm proximal term to 
to handle data heterogeneity.
Similarly, MOON~\citep{li2021model} uses the model-contrastive loss between the current and previous models. 
Again, our proposed method can be seamlessly combined with these approaches (c.f., Table~\ref{tab:fedprox}, Appendix~\ref{sec:fedprox-additional} and \ref{sec:moon}).

The closest related work to ours is a concurrent work by 
\citet{mukherjee2023locally}.  
There, authors utilize the Stochastic Polyak Stepsize \citep{loizou2021stochastic} in FL settings.
We do include SPS results in Section~\ref{sec:experiments}. 
There are some previous works that considered client adaptivity, including AdaAlter \citep{xie2019local} and \citet{wang2021local}. Both works utilize an adaptive client optimizer similar to Adagrad.
However, AdaAlter incurs twice the memory compared to FedAvg, as AdaAlter requires communicating both the model parameters as well as the ``client accumulators''; 
similarly, \citet{wang2021local} 
requires complicated local \textit{and} global correction steps to achieve good performance. 
In short, previous works that utilize client adaptivity require modification in server-side aggregation; without such heuristics, Adagrad \citep{duchi2011adaptive} exhibits suboptimal performance, as seen in Table~\ref{tab:results}.

Lastly, another line of works attempts to handle heterogeneity by utilizing control-variates \citep{praneeth2019scaffold, liang2019variance, karimireddy2020mime}, a similar idea to variance reduction from single-objective optimization \citep{johnson2013accelerating, gower2020variance}. While theoretically appealing, these methods either require periodic full gradient computation or are not usable under partial client participation, as all the clients must maintain a state throughout all rounds.

\section{DELTA($\Delta$)-SGD: Distributed locality adaptive SGD}
\label{sec:adsgd}
We now introduce $\Delta$-SGD.
In its simplest form, client $i$ at communication round $t$ uses the step size: 
\begin{equation} \label{eq:locality-stepsize}
    \eta_t^i = \min \Big\{ \tfrac{\|x_t^i - x_{t-1}^i \|}{2 \| \nabla f_i(x_t^i) - \nabla f_i (x_{t-1}^i)  \|},  \sqrt{ 1 + \theta_{t-1}^i } \eta_{t-1}^i \Big\}, \quad \theta_{t-1}^i = \eta_{t-1}^i/\eta_{t-2}^i.
\end{equation}
The first part of $\min\{\cdot, \cdot\}$ approximates the (inverse of) local smoothness of $f_i$,\footnote{Notice that $\| \nabla f_i(x_t^i) - \nabla f_i (x_{t-1}^i)  \| \leq \tfrac{1}{2\eta_t^i} \|x_t^i - x_{t-1}^i \| \approx \tilde{L}_{i, t}\|x_t^i - x_{t-1}^i \|.$ }
and the second part controls how fast $\eta_t^i$ can increase.
Indeed, $\Delta$-SGD with \eqref{eq:locality-stepsize} enjoys the following decrease in the Lyapunov function:
\vspace{-5mm}
\begin{align}
     &\| x_{t+1} - x^\star \|^2 + \frac{1}{2m} \sum_{i=1}^m  \| x_{t+1}^i - x_t^i  \|^2   + \frac{2}{m} \sum_{i=1}^m \Big[ \eta_{t+1}^i \theta_{t+1}^i \big( f_i(x_t^i) - f_i(x^\star) \big) \Big]  \nonumber  \\
     &\qquad\leq 
     \| x_t - x^\star \|^2  + \frac{1}{2m} \sum_{i=1}^m \| x_t^i - x_{t-1}^i  \|^2  + \frac{2}{m} \sum_{i=1}^m \Big[ \eta_t^i \theta_t^i \big(  f_i(x_{t-1}^i) - f_i(x^\star) \big)  \Big],
\end{align}
when $f_i$'s are assumed to be convex (c.f., Theorem~\ref{thm:convex-result} in Appendix~\ref{apdx:missing-proofs}).
For the FL settings, we extend \eqref{eq:locality-stepsize} by including stochasticity and local iterations, as summarized in Algorithm~\ref{alg:dist-adap-gd} and visualized in Figure~\ref{fig:step-size-plot}.
For brevity, we use $\tilde{\nabla}f_i(x) = \frac{1}{|\mathcal{B}|} \sum_{z\in\mathcal{B}} \nabla F_i(x,z)$ to denote the stochastic gradients with batch size $|\mathcal{B}| = b$. 

  \begin{algorithm}[H]
        \caption{DELTA($\Delta$)-SGD: \textbf{D}istribut\textbf{E}d \textbf{L}ocali\textbf{T}y \textbf{A}daptive SGD} \label{alg:dist-adap-gd}
\begin{algorithmic}[1]
    \State 
        \textbf{input}: $x_0 \in \mathbb{R}^d$, $\eta_0, \theta_0, \gamma > 0$, and $p\in(0,1)$.
    \For {each round $t=0, 1, \ldots, T{-}1$}
        \State sample a subset $\mathcal{S}_t$ of clients with size $|\mathcal{S}_t| = p \cdot m$
      \For {each machine in parallel for $i \in \mathcal{S}_t$}
        \State {set $x_{t,0}^i = x_t$}
        \State {set $\eta_{t,0}^i = \eta_0$ ~~and~~ $\theta_{t,0}^i = \theta_0$}
        \For {local step $k \in[K]$}
        \State 
        {$x_{t,k}^i = x_{t,k-1}^i - \eta_{t,k-1}^i \tilde{\nabla} f_i (x_{t,k-1}^i)$}  \Comment{update local parameter with $\Delta$-SGD}
        \State 
        {
        $\eta_{t,k}^i = \min \Big\{ \frac{\gamma \|x_{t,k}^i - x_{t,{k-1}}^i \|}{2 \| \tilde{\nabla} f_i(x_{t,k}^i) - \tilde{\nabla} f_i (x_{t,{k-1}}^i)  \|} , \sqrt{ 1 + \theta_{t,k-1}^i } \eta_{t,k-1}^i \Big\}$} 
        \State $\theta_{t,k}^i = \eta_{t,k}^i / \eta_{t,k-1}^i$ \Comment{(line 9 \& 10) update locality adaptive step size}
      \EndFor
      \EndFor
      \State{$x_{t+1} = \frac{1}{|\mathcal{S}_t|}\sum_{i\in \mathcal{S}_t} x_{t,K}^i$} \Comment{server-side aggregation}
    \EndFor\\
    \Return $x_{T}$
\end{algorithmic} 
  \end{algorithm}
We make a few remarks of Algorithm~\ref{alg:dist-adap-gd}. 
First, the input $\theta_0 > 0$ can be quite arbitrary, as it can be corrected, per client level, in the first local iteration (line 10); similarly for $\eta_0 > 0$, although $\eta_0$ should be sufficiently small to prevent divergence in the first local step. Second, we include the ``amplifier'' $\gamma$ to the first condition of step size (line 9), but this is only needed for Theorem~\ref{thm:nonconvex}.\footnote{For all our experiments, we use the default value $\gamma=2$ from the original implementation in \url{https://github.com/ymalitsky/adaptive_GD/blob/master/pytorch/optimizer.py}.}
Last, $\tilde{\nabla} f_i (x_{t,k-1}^i)$ shows up twice: in updating $x_{t,k}^i$ (line 8) and $\eta_{t, k}^i$ (line 9). Thus, one can use the same or different batches; we use the same batches in experiments to prevent additional gradient evaluations.

\subsection{Convergence Analysis}

\textbf{Technical challenges.} 
The main difference of analyzing Algorithm~\ref{alg:dist-adap-gd} compared to other decentralized optimization algorithms is that the step size $\eta_{t, k}^i$ not only depends on the round $t$ and local steps $k$, \textit{but also on} $i$, the client. To deal with the client-dependent step size, we require a slightly non-standard assumption on the dissimilarity between $f$ and $f_i$, as detailed below.

\begin{assumption}\label{assump:all}
There exist nonnegative constants $\sigma, \rho,$ and $G$ such that for all $i\in[M]$ and $x\in\mathbb{R}^d$, 
    \begin{subequations}
    \begin{align}
        \mathbb{E}\| \nabla F_i(x,z) - \nabla f_i(x)\|^2 &\leq \sigma^2, &&\text{(bounded variance)} \tag{1a} \label{eq:bounded-var} \\
        \| \nabla f_i(x) \| &\leq G, &&\text{(bounded gradient)} \tag{1b} \label{eq:bounded-grad} \\
        \| \nabla f_i(x) - \nabla f(x)\|^2 &\leq \rho \| \nabla f(x)\|^2. &&\text{(strong growth of dissimilarity)} \tag{1c}  \label{eq:bounded-heterogeneity} 
    \end{align}
    \end{subequations}
\end{assumption}

Assumption~\ref{eq:bounded-var} has been standard in stochastic optimization literature~\citep{ghadimi2013stochastic, stich2018local, khaled2020tighter}; recently, this assumption has been relaxed to weaker ones such as expected smoothness \citep{gower2021stochastic}, but this is out of scope of this work.
Assumption~\ref{eq:bounded-grad} is fairly standard in nonconvex optimization \citep{zaheer2018adaptive, ward2020adagrad}, 
and often used in FL setting \citep{xie2019local, xie2019asynchronous, reddi2021adaptive}.
Assumption~\ref{eq:bounded-heterogeneity} is reminiscent of the strong growth assumption in stochastic optimization \citep{schmidt2013fast}, which is still used in recent works \citep{cevher2019linear, vaswani2019fast, vaswani2019painless}. To the best of our knowledge, this is the first theoretical analysis of the FL setting where clients can use their own step size.
We now present the main theorem.
\begin{theorem}\label{thm:nonconvex}
Let Assumption \ref{assump:all} hold, with $\rho=\mathcal{O}(1)$. Further, suppose that $\gamma=\mathcal{O}(\frac{1}{K\sqrt{T}})$, and $\eta_0 = \mathcal{O}(\gamma)$. Then, the following property holds for Algorithm \ref{alg:dist-adap-gd}, for $T$ sufficiently large:
\begin{align*}
\frac{1}{T}\sum_{t=0}^{T-1} \,\mathbb{E}\left\|\nabla f\left(x_t\right)\right\|^2 &\leq  \mathcal{O}\left(\frac{\Psi_1}{\sqrt{T}}\right) + \mathcal{O}\left(\frac{\tilde{L}^2 \Psi_2}{T}\right) + \mathcal{O}\left(\frac{\tilde{L}^3 \Psi_2}{\sqrt{T^3}}\right),
\end{align*}
where $\Psi_1=\max\left\{\frac{\sigma^2}{b},f(x_0)-f(x^\star)\right\}$ and $\Psi_2=\left(\frac{\sigma^2}{b}+G^2\right)$ are global constants, with $b = |\mathcal{B}|$ being the batch size; 
$\tilde{L}$ is a constant at most the maximum of local smoothness, i.e., $ \max_{i,t} \tilde{L}_{i,t},$ where $\tilde{L}_{i, t}$ the local smoothness of $f_i$ at round $t$.
\end{theorem}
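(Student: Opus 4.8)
The plan is to treat Algorithm~\ref{alg:dist-adap-gd} as an inexact/perturbed local-SGD scheme and run a standard descent-lemma argument on $f$, but with all the step-size-dependent error terms carefully controlled using the structure of the $\Delta$-SGD rule rather than a fixed $1/L$. First I would establish uniform two-sided bounds on the per-client step sizes: an upper bound $\eta_{t,k}^i \le \bar\eta$ for some $\bar\eta = \mathcal{O}(\gamma) = \mathcal{O}(1/(K\sqrt{T}))$, and a crude lower bound. The upper bound is the crucial structural fact—it follows because the first branch of the $\min$ is, by the footnote inequality, essentially $\gamma/(2\tilde L_{i,t})$, while the second branch $\sqrt{1+\theta_{t,k-1}^i}\,\eta_{t,k-1}^i$ can only grow geometrically at a controlled rate; unrolling the recursion over the $K$ local steps starting from $\eta_{t,0}^i = \eta_0 = \mathcal{O}(\gamma)$ gives $\eta_{t,k}^i \le \eta_0 \prod (1+\theta)^{1/2}$, which stays $\mathcal{O}(\gamma)$ provided $K\sqrt{T}\gamma = \mathcal{O}(1)$, exactly the hypothesis. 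This is where the otherwise-cosmetic amplifier $\gamma$ and the smallness of $\eta_0$ earn their keep, and I expect this bookkeeping to be the main obstacle: one must show the $\theta$-factors don't compound badly across both $k$ and the warm-restart at each round (recall $\eta_{t,0}^i$ is reset to $\eta_0$ each round, which actually helps).

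Next I would write the one-round progress bound. Define the aggregated iterate $x_{t+1} = \tfrac{1}{|\mathcal S_t|}\sum_{i\in\mathcal S_t} x_{t,K}^i$ and expand $x_{t,K}^i = x_t - \sum_{k=0}^{K-1}\eta_{t,k}^i\tilde\nabla f_i(x_{t,k}^i)$. Taking expectations and using $L$-smoothness of $f$ (which follows from each $f_i$ being smooth), I get
\begin{align*}
\mathbb{E} f(x_{t+1}) \le \mathbb{E} f(x_t) - \mathbb{E}\big\langle \nabla f(x_t),\, \textstyle\frac{1}{|\mathcal S_t|}\sum_{i}\sum_k \eta_{t,k}^i \tilde\nabla f_i(x_{t,k}^i)\big\rangle + \tfrac{L}{2}\,\mathbb{E}\big\|\textstyle\frac{1}{|\mathcal S_t|}\sum_i\sum_k \eta_{t,k}^i\tilde\nabla f_i(x_{t,k}^i)\big\|^2.
\end{align*}
The inner-product term is handled by adding and subtracting $\nabla f_i(x_t)$ and then $\nabla f(x_t)$: the dissimilarity Assumption~\ref{eq:bounded-heterogeneity} converts $\|\nabla f_i(x_t)-\nabla f(x_t)\|^2$ into $\rho\|\nabla f(x_t)\|^2$, and because $\rho = \mathcal{O}(1)$ this perturbation can be absorbed into the main descent term as long as the effective step size is small. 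The client-drift terms $\|x_{t,k}^i - x_t\|$ are bounded in the usual local-SGD way by $\mathcal{O}(K\bar\eta)\cdot(\text{grad} + \text{noise})$, i.e. $\mathcal{O}(K\bar\eta(G+\sigma/\sqrt b))$, using Assumptions~\ref{eq:bounded-grad} and~\ref{eq:bounded-var}; this is where the $G^2$ and $\sigma^2/b$ in $\Psi_2$ enter, and where the $\tilde L^2$ and $\tilde L^3$ factors appear, since the drift couples to smoothness when re-expanding gradients at $x_{t,k}^i$ versus $x_t$. The quadratic term contributes the variance term $\mathcal{O}(L\bar\eta^2 K \sigma^2/b)$ plus gradient-squared terms.

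Finally I would collect terms of the form $c_t\,\mathbb{E}\|\nabla f(x_t)\|^2$ with $c_t \gtrsim \bar\eta K$ on the good side, verify the coefficient is positive for $T$ large (this needs $\rho$, $\gamma$, $\eta_0$ all small enough, which is exactly the hypothesis $\rho=\mathcal{O}(1)$, $\gamma=\mathcal{O}(1/(K\sqrt T))$, $\eta_0=\mathcal{O}(\gamma)$), telescope over $t=0,\dots,T-1$, and divide by $\sum_t c_t \asymp T\bar\eta K$. The leading $f(x_0)-f(x^\star)$ from telescoping, divided by $T\bar\eta K \asymp \sqrt T$ (since $\bar\eta K \asymp 1/\sqrt T$), gives the $\Psi_1/\sqrt T$ term; the variance/drift terms, which carry an extra $\bar\eta$ or $\bar\eta^2$ and the smoothness powers, give the $\tilde L^2\Psi_2/T$ and $\tilde L^3\Psi_2/T^{3/2}$ terms; and $\Psi_1 = \max\{\sigma^2/b, f(x_0)-f(x^\star)\}$ because the stochastic-gradient lower bound on $\mathbb{E}\|\tilde\nabla f_i\|^2$ forces a $\sigma^2/b$ floor that competes with the function-value gap. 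The delicate point throughout—beyond the step-size bound—is that the step sizes are random and client-dependent, so every expectation must be taken conditionally in the right order (condition on the step size, which is a function of past iterates, before using unbiasedness of $\tilde\nabla f_i$); I would be careful to note that $\eta_{t,k}^i$ and $\tilde\nabla f_i(x_{t,k}^i)$ may be correlated if the same batch is reused, so the cleanest route is to use the deterministic upper bound $\bar\eta$ wherever a step size multiplies a noise term and only exploit unbiasedness on the bias terms.
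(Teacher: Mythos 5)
Your proposal is correct and follows essentially the same route as the paper's proof: a descent step on $f$ using the locally derived smoothness constant $\tilde{L}$, a decomposition of the inner-product and quadratic terms into drift, heterogeneity, variance, and gradient-norm contributions bounded via \eqref{eq:bounded-var}--\eqref{eq:bounded-heterogeneity} (drift of order $K\eta(G+\sigma/\sqrt{b})$, absorption of the $\rho\|\nabla f(x_t)\|^2$ terms for $\rho=\mathcal{O}(1)$), followed by telescoping with $\sum_{k}\eta_{t,k}^i=\mathcal{O}(\gamma K)$ and the choice $\gamma=\mathcal{O}\big(\tfrac{1}{K\sqrt{T}}\big)$ to produce the three rates. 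One small caveat: local smoothness bounds the first branch of the $\min$ from \emph{below} by $\gamma/(2\tilde{L}_{i,t})$, not from above, so the key bound $\eta_{t,k}^i=\mathcal{O}(\gamma)$ really comes from the per-round reset to $\eta_0=\mathcal{O}(\gamma)$ together with the bounded growth enforced by the second branch (the $\theta$-recursion), which is also all that the paper itself invokes.
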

The convergence result in Theorem \ref{thm:nonconvex} implies a sublinear convergence to an $\varepsilon$-first order stationary point with at least $T=\mathcal{O}(\varepsilon^{-2})$ communication rounds.
We remind again that the conditions on $\gamma$ and $\eta_0$ are only required for our theory.\footnote{In all experiments, we use the default settings $\gamma=2$, $\eta_0 = 0.2$, and $\theta_0 = 1$ without additional tuning.} 
Importantly, we did not assume $f_i$ is globally $L$-smooth, 
a standard assumption in nonconvex optimization and FL literature \citep{ward2020adagrad, koloskova2020unified, li2020federated, reddi2021adaptive}; instead, we can obtain a smaller quantity, $\tilde{L}$, through our analysis;
for space limitation, we defer the details and the proof to Appendix~\ref{apdx:missing-proofs}.

\section{Experimental Setup and Results}
\label{sec:experiments}
We now introduce the experimental setup and discuss the results. Our implementation can be found in \url{https://github.com/jlylekim/auto-tuned-FL}.

\textbf{Datasets and models.}
We consider image classification and text classification tasks.
We use four datasets for image classification: MNIST, FMNIST, CIFAR-10, and CIFAR-100 \citep{krizhevsky2009learning}. For MNIST and FMNIST, we train a shallow CNN with two convolutional and two fully-connected layers, followed by dropout and ReLU activations. 
For CIFAR-10, we train a ResNet-18 \citep{he2016deep}. 
For CIFAR-100, we train both ResNet-18 and ResNet-50 to study the effect of changing the model architecture. 
For text classification, we use two datasets: DBpedia and AGnews datasets \citep{zhang2015character}, and train a DistillBERT \citep{sanh2019distilbert} for classification.

For each dataset, we create a federated version by randomly partitioning the training data among $100$ clients ($50$ for text classification), with each client getting $500$ examples.
To control the level of non-iidness, we apply latent Dirichlet allocation (LDA) over the labels following \citet{hsu2019measuring}, where the degree class heterogeneity can be parameterized by the Dirichlet concentration parameter $\alpha.$ 
The class distribution of different $\alpha$'s is visualized in 
Figure~\ref{fig:diff-noniid} in Appendix~\ref{sec:diff-alpha-plots}.

\textbf{FL setup and optimizers.}
We fix the number of clients to be $100$ for the image classification, and $50$ for the text classification; we randomly sample $10 \%$ as participating clients. 
We perform $E$ local epochs of training over each client's dataset. We utilize mini-batch gradients of size $b = 64$ for the image classification, and $b=16$ for the text classification, leading to $K \approx \lfloor \tfrac{E \cdot 500}{b} \rfloor$ local gradient steps; 
we use $E=1$ for all settings. For client optimizers, we compare stochastic gradient descent (SGD), SGD with momentum (SGDM), adaptive methods including Adam \citep{kingma2014adam}, Adagrad \citep{duchi2011adaptive}, SGD with stochastic Polyak step size (SPS) \citep{loizou2021stochastic},
and our proposed method: $\Delta$-SGD in Algorithm~\ref{alg:dist-adap-gd}. 
As our focus is on client adaptivity, we mostly present the results using FedAvg \citep{mcmahan2017communication} as the server optimizer; additional results using FedAdam can be found in Appendix~\ref{sec:fedadam-additional}.

\textbf{Hyperparameters.} 
For each optimizer, we perform a grid search of learning rates on a \textit{single task}: CIFAR-10 classification trained with a ResNet-18, with Dirichlet concentration parameter $\alpha=0.1$; for the rest of the settings, we use the same learning rates.
For SGD, we perform a grid search with $\eta \in \{0.01, 0.05, 0.1, 0.5\}$. For SGDM, we use the same grid for $\eta$ and use momentum parameter $\beta = 0.9$. 
To properly account for the SGD(M) fine-tuning typically done in practice, we also test dividing the step size by 10 after $50\%$, and again by 10 after $75\%$ of the total training rounds (LR decay).
For Adam and Adagrad, we grid search with $\eta \in \{0.001, 0.01, 0.1\}.$ 
For SPS, we use the default setting of the official implementation.\footnote{I.e., we use $f_i^\star = 0$, and $c=0.5,$ for the SPS step size: $\frac{f_i(x) - f_i^\star}{c \| \nabla f_i (x) \|^2}$. The official implementation can be found in
\url{https://github.com/IssamLaradji/sps}.}
For $\Delta$-SGD, we append $\delta$ in front of the second condition: $\sqrt{ 1 + \delta \theta_{t,k-1}^i } \eta_{t,k-1}^i$ following \citet{malitsky2019adaptive}, and use $\delta=0.1$ for all experiments.\footnote{We also demonstrate in Appendix~\ref{sec:ease-of-tuning} that $\delta$ has very little impact on the final accuracy of $\Delta$-SGD.}
Finally, for the number of rounds $T$, we use $500$ for MNIST, $1000$ for FMNIST, $2000$ for CIFAR-10 and CIFAR-100, and $100$ for the text classification tasks.

\subsection{Results}
\label{sec:results}

We clarify again that the best step size for each client optimizer was tuned via grid-search for a single task: CIFAR-10 classification trained with a ResNet-18, with Dirichlet concentration parameter $\alpha=0.1$. We then intentionally use the same step size in all other tasks to highlight two points: $i)$ $\Delta$-SGD works well without any tuning across different datasets, model architectures, and degrees of heterogeneity; $ii)$ other optimizers perform suboptimally without additional tuning. 

\begin{table}[!h]
\small
    \centering
    \begin{tabular}{*7c}    
        \toprule
        \textit{Non-iidness} & \textit{Optimizer} & \multicolumn{5}{c}{\textit{Dataset / Model }} \\
        \midrule
        \multirow{2}{*}{$\text{Dir}(\alpha \cdot \mathbf{p})$} &  & MNIST & FMNIST & CIFAR-10 & CIFAR-100 & CIFAR-100 \\
        \multicolumn{2}{c}{} & CNN & CNN & ResNet-18 & ResNet-18 & ResNet-50 \\
        \midrule
        \multirow{8}{*}{$\alpha = 1$}
        & SGD  & \textcolor{darkspringgreen}{\textbf{98.3}}$_{\downarrow(0.2)}$ & 86.5$_{\downarrow(0.8)}$ & 87.7$_{\textcolor{frenchrose}{\downarrow(2.1)}}$ & 57.7$_{\textcolor{frenchrose}{\downarrow(4.2)}}$ & 53.0$_{\textcolor{frenchrose}{\downarrow(12.8)}}$ \\ 
        & SGD ($\downarrow$)  & 97.8$_{\downarrow(0.7)}$ & 86.3$_{\downarrow(1.0)}$ & 87.8$_{\textcolor{frenchrose}{\downarrow(2.0)}}$ & \textcolor{darkspringgreen}{\textbf{61.9}}$_{\downarrow(0.0)}$ & 60.9$_{\textcolor{frenchrose}{\downarrow(4.9)}}$ \\
        & SGDM  & \textcolor{darkspringgreen}{\textbf{98.5}}$_{\downarrow(0.0)}$ & 85.2$_{\textcolor{frenchrose}{\downarrow(2.1)}}$ & 88.7$_{\downarrow(1.1)}$ & 58.8$_{\textcolor{frenchrose}{\downarrow(3.1)}}$ & 60.5$_{\textcolor{frenchrose}{\downarrow(5.3)}}$ \\
        & SGDM ($\downarrow$)  & \textcolor{darkspringgreen}{\textbf{98.4}}$_{\downarrow(0.1)}$ & \textcolor{darkspringgreen}{\textbf{87.2}}$_{\downarrow(0.1)}$ & \textcolor{darkspringgreen}{\textbf{89.3}}$_{\downarrow(0.5)}$ & \textcolor{darkspringgreen}{\textbf{61.4}}$_{\downarrow(0.5)}$ & 63.3$_{\textcolor{frenchrose}{\downarrow(2.5)}}$ \\
        & Adam  & 94.7$_{\textcolor{frenchrose}{\textcolor{frenchrose}{\downarrow(3.8)}}}$ & 71.8$_{\textcolor{frenchrose}{\downarrow(15.5)}}$ & \textcolor{darkspringgreen}{\textbf{89.4}}$_{\downarrow(0.4)}$ & 55.6$_{\textcolor{frenchrose}{\downarrow(6.3)}}$ & 61.4$_{\textcolor{frenchrose}{\downarrow(4.4)}}$ \\
        & Adagrad  & 64.3$_{\textcolor{frenchrose}{\downarrow(34.2)}}$ & 45.5$_{\textcolor{frenchrose}{\downarrow(41.8)}}$ & 86.6$_{\textcolor{frenchrose}{\downarrow(3.2)}}$ & 53.5$_{\textcolor{frenchrose}{\downarrow(8.4)}}$ & 51.9$_{\textcolor{frenchrose}{\downarrow(13.9)}}$ \\
        & SPS  & 10.1$_{\textcolor{frenchrose}{\downarrow(88.4)}}$ & 85.9$_{\downarrow(1.4)}$ & 82.7$_{\textcolor{frenchrose}{\downarrow(7.1)}}$ & 1.0$_{\textcolor{frenchrose}{\downarrow(60.9)}}$ & 50.0$_{\textcolor{frenchrose}{\downarrow(15.8)}}$ \\
        \cmidrule{2-7}
        & $\Delta$-SGD    & \textcolor{darkspringgreen}{\textbf{98.4}}$_{\downarrow(0.1)}$  & \textcolor{darkspringgreen}{\textbf{87.3}}$_{\downarrow(0.0)}$  & \textcolor{darkspringgreen}{\textbf{89.8}}$_{\downarrow(0.0)}$  & \textcolor{darkspringgreen}{\textbf{61.5}}$_{\downarrow(0.4)}$ & \textcolor{darkspringgreen}{\textbf{65.8}}$_{\downarrow(0.0)}$ \\
        \midrule
        \multirow{8}{*}{$\alpha = 0.1$}       
        & SGD  & \textcolor{darkspringgreen}{\textbf{98.1}}$_{\downarrow(0.0)}$ & 83.6$_{\textcolor{frenchrose}{\downarrow(2.8)}}$ & 72.1$_{\textcolor{frenchrose}{\downarrow(12.9)}}$ & 54.4$_{\textcolor{frenchrose}{\downarrow(6.7)}}$ & 44.2$_{\textcolor{frenchrose}{\downarrow(19.9)}}$ \\ 
        & SGD ($\downarrow$)  & \textcolor{darkspringgreen}{\textbf{98.0}}$_{\downarrow(0.1)}$ & 84.7$_{\downarrow(1.7)}$ & 78.4$_{\textcolor{frenchrose}{\downarrow(6.6)}}$ & 59.3$_{\downarrow(1.8)}$ & 48.7$_{\textcolor{frenchrose}{\downarrow(15.4)}}$ \\
        & SGDM  & \textcolor{darkspringgreen}{\textbf{97.6}}$_{\downarrow(0.5)}$ & 83.6$_{\textcolor{frenchrose}{\downarrow(2.8)}}$ & 79.6$_{\textcolor{frenchrose}{\downarrow(5.4)}}$ & 58.8$_{\textcolor{frenchrose}{\downarrow(2.3)}}$ & 52.3$_{\textcolor{frenchrose}{\downarrow(11.8)}}$ \\
        & SGDM ($\downarrow$)  & \textcolor{darkspringgreen}{\textbf{98.0}}$_{\downarrow(0.1)}$ & \textcolor{darkspringgreen}{\textbf{86.1}}$_{\downarrow(0.3)}$ & 77.9$_{\textcolor{frenchrose}{\downarrow(7.1)}}$ & 60.4$_{\downarrow(0.7)}$ & 52.8$_{\textcolor{frenchrose}{\downarrow(11.3)}}$ \\
        & Adam  & 96.4$_{\downarrow(1.7)}$ & 80.4$_{\textcolor{frenchrose}{\downarrow(6.0)}}$ & \textcolor{darkspringgreen}{\textbf{85.0}}$_{\downarrow(0.0)}$ & 55.4$_{\textcolor{frenchrose}{\downarrow(5.7)}}$ & 58.2$_{\textcolor{frenchrose}{\downarrow(5.9)}}$\\
        & Adagrad  & 89.9$_{\textcolor{frenchrose}{\downarrow(8.2)}}$ & 46.3$_{\textcolor{frenchrose}{\downarrow(40.1)}}$ & 84.1$_{\downarrow(0.9)}$ & 49.6$_{\textcolor{frenchrose}{\downarrow(11.5)}}$ & 48.0$_{\textcolor{frenchrose}{\downarrow(16.1)}}$ \\
        & SPS  & 96.0$_{\textcolor{frenchrose}{\downarrow(2.1)}}$ & 85.0$_{\downarrow(1.4)}$ & 70.3$_{\textcolor{frenchrose}{\downarrow(14.7)}}$ & 42.2$_{\textcolor{frenchrose}{\downarrow(18.9)}}$ & 42.2$_{\textcolor{frenchrose}{\downarrow(21.9)}}$ \\
        \cmidrule{2-7}
        & $\Delta$-SGD    & \textcolor{darkspringgreen}{\textbf{98.1}}$_{\downarrow(0.0)}$  & \textcolor{darkspringgreen}{\textbf{86.4}}$_{\downarrow(0.0)}$  & \textcolor{darkspringgreen}{\textbf{84.5}}$_{\downarrow(0.5)}$  & \textcolor{darkspringgreen}{\textbf{61.1}}$_{\downarrow(0.0)}$ & \textcolor{darkspringgreen}{\textbf{64.1}}$_{\downarrow(0.0)}$ \\ 
        \midrule
        \multirow{8}{*}{$\alpha = 0.01$}      
        & SGD  & 96.8$_{\downarrow(0.7)}$ & 79.0$_{\downarrow(1.2)}$ & 22.6$_{\textcolor{frenchrose}{\downarrow(11.3)}}$ & 30.5$_{\downarrow(1.3)}$ & 24.3$_{\textcolor{frenchrose}{\downarrow(7.1)}}$ \\ 
        & SGD ($\downarrow$)  & \textcolor{darkspringgreen}{\textbf{97.2}}$_{\downarrow(0.3)}$ & 79.3$_{\downarrow(0.9)}$ & \textcolor{darkspringgreen}{\textbf{33.9}}$_{\downarrow(0.0)}$ & 30.3$_{\downarrow(1.5)}$ & 24.6$_{\textcolor{frenchrose}{\downarrow(6.8)}}$ \\
        & SGDM  & 77.9$_{\textcolor{frenchrose}{\downarrow(19.6)}}$ &  75.7$_{\textcolor{frenchrose}{\downarrow(4.5)}}$ & 28.4$_{\textcolor{frenchrose}{\downarrow(5.5)}}$ & 24.8$_{\textcolor{frenchrose}{\downarrow(7.0)}}$ & 22.0$_{\textcolor{frenchrose}{\downarrow(9.4)}}$ \\
        & SGDM ($\downarrow$)  & 94.0$_{\textcolor{frenchrose}{\downarrow(3.5)}}$ & 79.5$_{\downarrow(0.7)}$ & 29.0$_{\textcolor{frenchrose}{\downarrow(4.9)}}$ & 20.9$_{\textcolor{frenchrose}{\downarrow(10.9)}}$ & 14.7$_{\textcolor{frenchrose}{\downarrow(16.7)}}$ \\
        & Adam  & 80.8$_{\textcolor{frenchrose}{\downarrow(16.7)}}$ & 60.6$_{\textcolor{frenchrose}{\downarrow(19.6)}}$ & 22.1$_{\textcolor{frenchrose}{\downarrow(11.8)}}$ & 18.2$_{\textcolor{frenchrose}{\downarrow(13.6)}}$ & 22.6$_{\textcolor{frenchrose}{\downarrow(8.8)}}$\\
        & Adagrad  & 72.4$_{\textcolor{frenchrose}{\downarrow(25.1)}}$ & 45.9$_{\textcolor{frenchrose}{\downarrow(34.3)}}$ & 12.5$_{\textcolor{frenchrose}{\downarrow(21.4)}}$ & 25.8$_{\textcolor{frenchrose}{\downarrow(6.0)}}$ & 22.2$_{\textcolor{frenchrose}{\downarrow(9.2)}}$\\
        & SPS  & 69.7$_{\textcolor{frenchrose}{\downarrow(27.8)}}$ & 44.0$_{\textcolor{frenchrose}{\downarrow(36.2)}}$ & 21.5$_{\textcolor{frenchrose}{\downarrow(12.4)}}$ & 22.0$_{\textcolor{frenchrose}{\downarrow(9.8)}}$ & 17.4$_{\textcolor{frenchrose}{\downarrow(14.0)}}$ \\
        \cmidrule{2-7}
        & $\Delta$-SGD    & \textcolor{darkspringgreen}{\textbf{97.5}}$_{\downarrow(0.0)}$  & \textcolor{darkspringgreen}{\textbf{80.2}}$_{\downarrow(0.0)}$  & 31.6$_{\textcolor{frenchrose}{\downarrow(2.3)}}$  & \textcolor{darkspringgreen}{\textbf{31.8}}$_{\downarrow(0.0)}$ & \textcolor{darkspringgreen}{\textbf{31.4}}$_{\downarrow(0.0)}$ \\ 
        \bottomrule\\
    \end{tabular}
    \caption{\textit{Experimental results based on the settings detailed in Section~\ref{sec:experiments}}.
    Best accuracy ($\pm$ 0.5\%) for each task are shown in \textbf{\textcolor{darkspringgreen}{bold}}. Subscripts$_{\downarrow(x.x)}$ is the performance difference from the best result, and is highlighted in \textcolor{frenchrose}{pink} when it is bigger than 2\%. The down-arrow symbol 
    ($\downarrow$) indicates step-wise learning rate decay, where the step sizes are divided by $10$ after 50\%, and another by $10$ after 75\% of the total rounds.
    }
    \label{tab:results}
\end{table}

\begin{figure}[!h]
    \centering
    \includegraphics[scale=0.32]{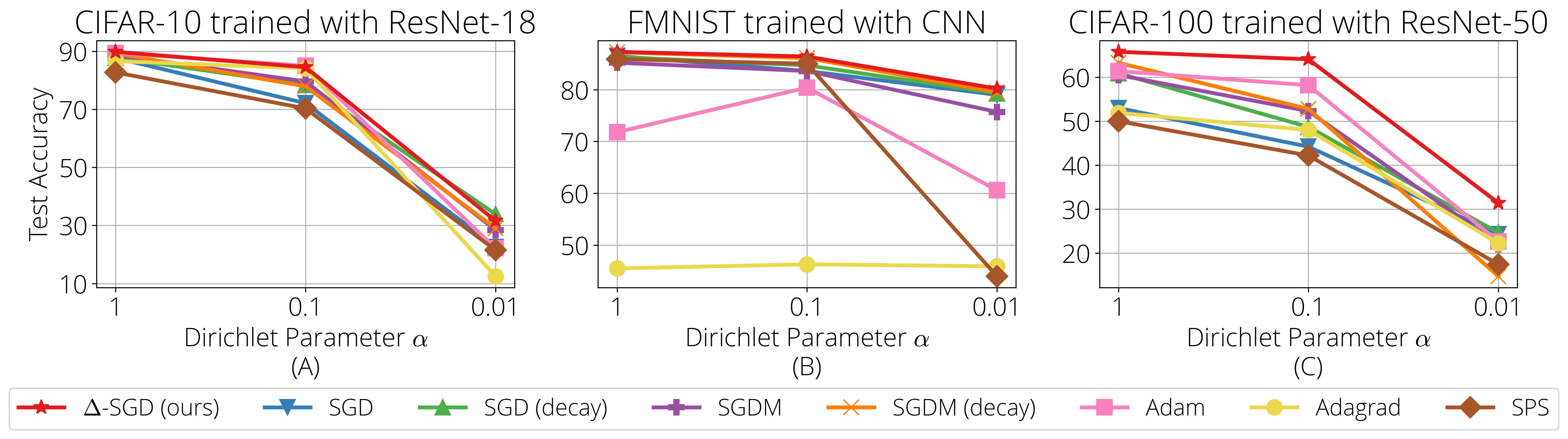}
    \caption{\textit{The effect of stronger heterogeneity on different client optimizers, induced by the Dirichlet concentration parameter} $\alpha \in \{0.01, 0.1, 1\}$. $\Delta$-SGD remains robust performance in all cases, whereas other methods show significant performance degradation when changing the level of heterogeneity $\alpha$, or when changing the setting (model/architecture). 
    }
    \label{fig:diff-alpha}
    \vspace{-0.7cm}
\end{figure}

\textbf{Changing the level of non-iidness.}
We first investigate how the performance of different client optimizers degrade in increasing degrees of heterogeneity by varying the concentration parameter $\alpha\in \{ 1, 0.1, 0.01\}$ multiplied to the prior of Dirichlet distribution, following \citet{hsu2019measuring}.

Three illustrative cases are visualized in Figure~\ref{fig:diff-alpha}. 
We remind that the step sizes are tuned for the task in Figure~\ref{fig:diff-alpha}(A).
For this task, with $\alpha=1$ (i.e., closer to iid), all methods perform better than $\alpha=0.1,$ as expected. With $\alpha=0.01,$ which is highly non-iid (c.f., Figure~\ref{fig:diff-noniid} in Appendix~\ref{sec:diff-alpha-plots}), we see a significant drop in performance for all methods. 
SGD with LR decay and $\Delta$-SGD perform the best, while adaptive methods like Adam ($85\% \rightarrow 22.1\%$) and Adagrad ($84.1\% \rightarrow 12.5\%$) degrade noticeably more than other methods.

Figure~\ref{fig:diff-alpha}(B) shows the results of training FMNIST with a CNN. This problem is generally considered easier than CIFAR-10 classification. The experiments show that the performance degradation with varying $\alpha$'s is much milder in this case. Interestingly, adaptive methods such as Adam, Adagrad, and SPS perform much worse than other methods

Lastly, in Figure~\ref{fig:diff-alpha}(C), results for CIFAR-100 classification trained with a ResNet-50 are illustrated. $\Delta$-SGD exhibits superior performance in all cases of $\alpha$. Unlike MNIST and FMNIST, Adam enjoys the second ($\alpha=0.1$) or the third ($\alpha=1$) best performance in this task, complicating how one should ``tune'' Adam for the task at hand. Other methods, including SGD with and without momentum/LR decay, Adagrad, and SPS, perform much worse than $\Delta$-SGD.

\textbf{Changing the model architecture/dataset.}
For this remark, let us focus on CIFAR-100 trained on ResNet-18 versus on ResNet-50, with $\alpha=0.1$, illustrated in Figure~\ref{fig:diff_data_model}(A).
SGD and SGDM (both with and without LR decay), Adagrad, and SPS perform worse using ResNet-50 than ResNet-18. 
This is a \textit{counter-intuitive} behavior, as one would expect to get better accuracy by using a more powerful model, although the step sizes are tuned on ResNet-18.
$\Delta$-SGD is an exception: without any additional tuning, \textit{$\Delta$-SGD can improve its performance.}
Adam also improves similarly, but the achieved accuracy is significantly worse than that of $\Delta$-SGD.

We now focus on cases where the dataset changes, but the model architecture remains the same. We mainly consider two cases, illustrated in Figure~\ref{fig:diff_data_model}(B) and (C): When CNN is trained for classifying MNIST versus FMNIST, and when ResNet-18 is trained for CIFAR-10 versus CIFAR-100. 
From (B), one can infer the difficulty in tuning SGDM: without the LR decay, SGDM only achieves around 78\% accuracy for MNIST classification, while SGD without LR decay still achieves over $96\%$ accuracy. For FMNIST on the other hand, SGDM performs relatively well, but adaptive methods like Adam, Adagrad, and SPS degrade significantly.

Aggravating the complexity of fine-tuning SGD(M) for MNIST, one can observe from (C) a similar trend in CIFAR-10 trained with ResNet-18. In that case, $\Delta$-SGD does not achieve the best test accuracy (although it is the second best with a pretty big margin with the rest), while SGD with LR decay does. However, without LR decay, the accuracy achieved by SGD drops \emph{more than} 11\%. Again, adaptive methods like Adam, Adagrad, and SPS perform suboptimally in this case.

\begin{figure}[h!]
    \centering
    \includegraphics[scale=0.32]{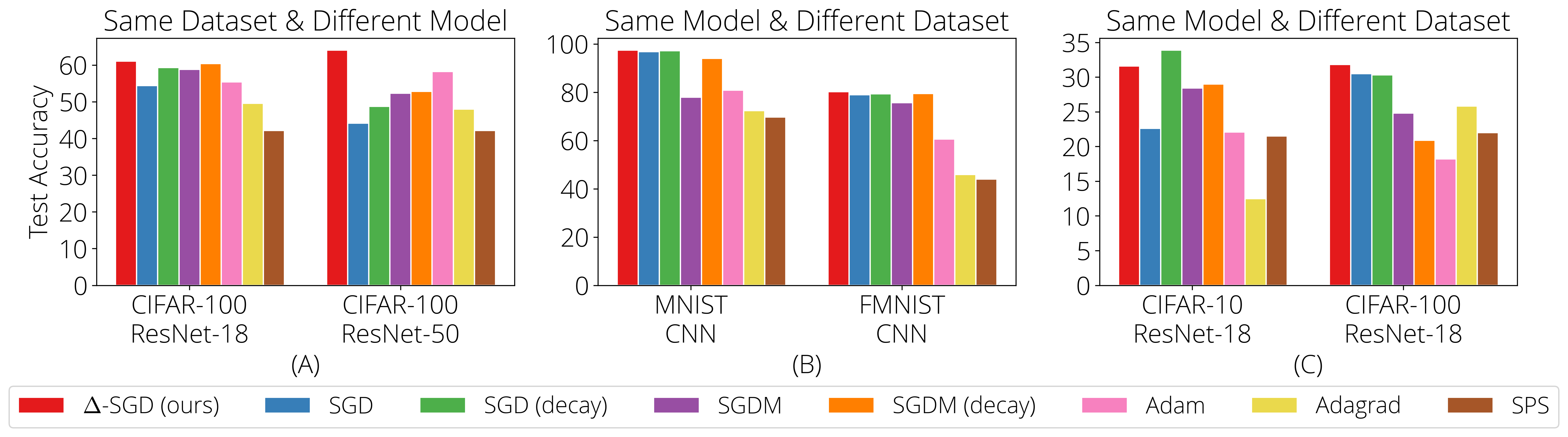}
    \caption{\textit{The effect of changing the dataset and the model architecture on different client optimizers.} $\Delta$-SGD remains superior performance without additional tuning when model or dataset changes, whereas other methods often degrades in performance.
    (A): CIFAR-100 trained with ResNet-18 versus Resnet-50 ($\alpha=0.1$), (B): MNIST versus FMNIST trained with CNN ($\alpha=0.01$), (C): CIFAR-10 versus CIFAR-100 trained with ResNet-18 ($\alpha=0.01$). 
    }
    \label{fig:diff_data_model}
\end{figure}

\textbf{Changing the domain: text classification.}
In Table~\ref{tab:text}, we make a bigger leap, and test the performance of different client optimizers for text classification tasks. We use the DBpedia and AGnews datasets \citep{zhang2015character}, and train a DistillBERT \citep{sanh2019distilbert} for classification.
Again, the best step size for each client optimizer was tuned for the CIFAR-10 image classification trained with a ResNet-18, for $\alpha=0.1$. Thus, the type of the dataset changed completely from image to text, as well as the model architecture: from vision to language. 

Not surprisingly, four methods (SGDM, SGDM($\downarrow$), Adam, and Adagrad) ended up learning nothing (i.e., the achieved accuracy is 1/(\# of labels)), indicating the fine-tuned step sizes for these methods using CIFAR-10 classification task was too large for the considered text classification tasks.  
$\Delta$-SGD, on the other hand, still achieves competitive accuracies without additional tuning, thanks to the locality adaptive step size. 
Interestingly, SGD, SGD($\downarrow$), and SPS worked well for the text classification tasks, which is contrary to their suboptimal performances for image classification tasks in Table~\ref{tab:results}.

\textbf{Changing the loss function.}
In Table~\ref{tab:fedprox}, we illustrate how $\Delta$-SGD performs when combined with a different loss function, such as FedProx \citep{li2020federated}. 
FedProx aims to address heterogeneity; we thus used the lowest concentration parameter, $\alpha=0.01$. We only present a subset of results: CIFAR-10(100) classification with ResNet-18(50) (additional results in Appendix~\ref{sec:fedprox-additional}).
The results show that $\Delta$-SGD again outperforms all other methods with a significant margin: 
$\Delta$-SGD is not only the best, but also the performance difference is at least 2.5\%, and as large as 26.9\%. Yet, compared to the original setting without the proximal term in Table~\ref{tab:results}, it is unclear whether or not the additional proximal term helps. For $\Delta$-SGD, the accuracy increased for CIFAR-10 with ResNet-18, but slightly got worse for CIFAR-100 with ResNet-50; a similar story applies to other methods.

\begin{table}[h!]
\small
\centering
    \subfloat[\textit{Experimental results for text classification }]{
    \begin{tabular}{*3c}    
        \toprule
         \textbf{Text classification} & \multicolumn{2}{c}{\textit{Dataset / Model }} \\
        \midrule
            $\alpha=1$ & Agnews & Dbpedia  \\
          \textit{Optimizer} & \multicolumn{2}{c}{DistillBERT} \\
        \midrule
         SGD  & \textcolor{darkspringgreen}{\textbf{91.1}}$_{\downarrow(0.5)}$ &  96.0$_{\textcolor{frenchrose}{\downarrow(2.9)}}$  \\ 
         SGD ($\downarrow)$  & \textcolor{darkspringgreen}{\textbf{91.6}}$_{\downarrow(0.0)}$ & \textcolor{darkspringgreen}{\textbf{98.7}}$_{\downarrow(0.2)}$  \\ 
         SGDM  & 25.0$_{\textcolor{frenchrose}{\downarrow(66.6)}}$ &  7.1$_{\textcolor{frenchrose}{\downarrow(91.8)}}$  \\ 
         SGDM ($\downarrow)$  & 25.0$_{\textcolor{frenchrose}{\downarrow(66.6)}}$  & 7.1$_{\textcolor{frenchrose}{\downarrow(91.8)}}$  \\ 
         Adam  & 25.0$_{\textcolor{frenchrose}{\downarrow(66.6)}}$ & 7.1$_{\textcolor{frenchrose}{\downarrow(91.8)}}$  \\ 
         Adagrad  & 25.0$_{\textcolor{frenchrose}{\downarrow(66.6)}}$ & 7.1$_{\textcolor{frenchrose}{\downarrow(91.8)}}$  \\ 
         SPS  & \textcolor{darkspringgreen}{\textbf{91.5}}$_{\downarrow(0.1)}$ &  \textcolor{darkspringgreen}{\textbf{98.9}}$_{\downarrow(0.0)}$ \\
        \midrule
        $\Delta$-SGD  & 90.7$_{\downarrow(0.9)}$ & \textcolor{darkspringgreen}{\textbf{98.6}}$_{\downarrow(0.3)}$   \\ 
        \bottomrule\\
    \end{tabular}
    \label{tab:text}
    } \quad
    \subfloat[\textit{Experimental results using \textbf{FedProx} loss}.]{
    \begin{tabular}{*3c}    
        \toprule
          \textbf{FedProx} & \multicolumn{2}{c}{\textit{Dataset / Model }} \\
        \midrule
          $\alpha=0.01$ & CIFAR-10 & CIFAR-100  \\
          \textit{Optimizer} &  ResNet-18 & ResNet-50 \\
        \midrule
         SGD  & 20.0$_{\textcolor{frenchrose}{\downarrow(13.8)}}$ & 25.2$_{\textcolor{frenchrose}{\downarrow(5.9)}}$   \\ 
         SGD ($\downarrow)$  & 31.3$_{\textcolor{frenchrose}{\downarrow(2.5)}}$ & 20.2$_{\textcolor{frenchrose}{\downarrow(10.8)}}$   \\ 
         SGDM  & 29.3$_{\textcolor{frenchrose}{\downarrow(4.4)}}$ & 23.8$_{\textcolor{frenchrose}{\downarrow(7.2)}}$   \\ 
         SGDM ($\downarrow)$  & 25.3$_{\textcolor{frenchrose}{\downarrow(8.5)}}$ & 15.0$_{\textcolor{frenchrose}{\downarrow(16.0)}}$   \\ 
         Adam  & 28.1$_{\textcolor{frenchrose}{\downarrow(5.7)}}$ & 22.6$_{\textcolor{frenchrose}{\downarrow(8.4)}}$   \\ 
         Adagrad  & 19.3$_{\textcolor{frenchrose}{\downarrow(14.5)}}$ & 4.1$_{\textcolor{frenchrose}{\downarrow(26.9)}}$   \\ 
         SPS  & 27.6$_{\textcolor{frenchrose}{\downarrow(6.2)}}$ & 16.5$_{\textcolor{frenchrose}{\downarrow(14.5)}}$  \\
        \midrule
        $\Delta$-SGD  & \textcolor{darkspringgreen}{\textbf{33.8}}$_{\downarrow(0.0)}$ & \textcolor{darkspringgreen}{\textbf{31.0}}$_{\downarrow(0.0)}$   \\ 
        \bottomrule\\
    \end{tabular} 
    \label{tab:fedprox}
    }
    \caption{Additional experiments for a different domain (a), and a different loss function (b).}
\end{table}

\noindent
\textbf{Other findings.}
We present all the omitted theorems and proofs in Appendix~\ref{apdx:missing-proofs}, and additional empirical findings in Appendix~\ref{apdx:additional-experiments}. Specifically, we demonstrate: the robustness of the performance of $\Delta$-SGD (Appendix~\ref{sec:ease-of-tuning}), the effect of the different numbers of local iterations (Appendix~\ref{sec:diff-local-iter}), additional experiments when the size of local dataset is different per client (Appendix~\ref{sec:diff-num-localdata}), additional experimental results using FedAdam \citep{reddi2021adaptive} (Appendix~\ref{sec:fedadam-additional}), additional experiential results using MOON (Appendix~\ref{sec:moon}) and FedProx (Appendix~\ref{sec:fedprox-additional}) loss functions, and
three independent trials for a subset of tasks (Appendix~\ref{sec:plot-standard-dev}). We also visualize the step size conditions for $\Delta$-SGD (Appendix~\ref{sec:step-size-plot}) as well as the degree of heterogeneity (Appendix~\ref{sec:diff-alpha-plots}).

\vspace{-1.5mm}
\section{Conclusion}
\vspace{-1.5mm}
In this work, we proposed $\Delta$-SGD, a distributed SGD scheme equipped with an adaptive step size that enables each client to use its own step size and adapts to the local smoothness of the function each client is optimizing. We proved the convergence of $\Delta$-SGD in the general nonconvex setting and presented extensive empirical results, where the superiority of $\Delta$-SGD is shown in various scenarios without any tuning. 
For future works, extending $\Delta$-SGD to a coordinate-wise step size in the spirit of \citet{duchi2011adaptive, kingma2014adam}, applying the step size to more complex algorithms like (local) factored gradient descent \citep{kim2022local}, and enabling asynchronous updates \citep{assran2020advances, toghani2022persa, nguyen2022federated} could be interesting directions.

\subsubsection*{Acknowledgments}

This work is supported by NSF FET: Small No. 1907936, NSF MLWiNS CNS No. 2003137 (in collaboration with Intel), NSF CMMI No. 2037545, NSF CAREER award No. 2145629, NSF CIF No. 2008555, Rice InterDisciplinary Excellence Award (IDEA), NSF CCF No. 2211815, NSF DEB No. 2213568, and Lodieska Stockbridge Vaughn Fellowship.

\bibliography{references}
\bibliographystyle{iclr2024_conference}

\newpage
\appendix
\allowdisplaybreaks

\noindent\makebox[\linewidth]{\rule{\linewidth}{1.5pt}}
\section*{\centering \large Supplementary Materials for\\ ``Adaptive Federated Learning with Auto-Tuned Clients''}
\noindent\makebox[\linewidth]{\rule{\linewidth}{1.5pt}}

\vspace{5mm}

In this appendix, we provide all missing proofs that were not present in the main text, as well as additional plots and experiments. The appendix is organized as follows.

\begin{itemize}[leftmargin=*]
    \item In Appendix~\ref{apdx:missing-proofs}, we provide all the missing proofs. In particular: 
    \begin{itemize}
        \item In Section~\ref{sec:proof-alg1}, we provide the proof of Algorithm~\ref{alg:dist-adap-gd}, under Assumption~\ref{assump:all}. 
        \item In Section~\ref{sec:proof-convex}, we provide the proof of Algorithm,~\ref{alg:dist-adap-gd}, with some modified assumptions, including that $f_i$ is convex for all $i \in [m]$. 
    \end{itemize}
    \item In Appendix~\ref{apdx:additional-experiments}, we provide additional experiments, as well as miscellaneous plots that were missing in the main text due to the space constraints. Specifically:
    \begin{itemize}
        \item In Section~\ref{sec:ease-of-tuning}, we provide some perspectives on the ease of tuning of $\Delta$-SGD in Algorithm~\ref{alg:dist-adap-gd}. 
        \item In Section~\ref{sec:diff-local-iter}, we provide some perspectives on how $\Delta$-SGD performs using different number of local iterations. 
        \item In Section~\ref{sec:diff-num-localdata}, we provide additional experimental results where each client has different number of samples as local dataset.
        \item In Section~\ref{sec:fedadam-additional}, we provide additional experimental results using FedAdam \citep{reddi2021adaptive} server-side adaptive method.
        \item In Section~\ref{sec:moon}, we provide additional experimental results using MOON \citep{li2021model} loss function.
        \item In Section~\ref{sec:fedprox-additional}, we provide additional experiemtns using FedProx \citep{li2020federated} loss function. 
        \item In Section~\ref{sec:plot-standard-dev}, we perform three independent trials for a subset of tasks for all client-optimizers. We plot the average and the standard deviations in Figure~\ref{fig:standard-dev}, and report those values in Table~\ref{tab:standard-dev}.
        \item In Section~\ref{sec:step-size-plot}, we visualize the step size conditions for $\Delta$-SGD.
        \item In Section~\ref{sec:diff-alpha-plots}, we provide illustration of the different level of heterogeneity induced by the Dirichlet concentration parameter $\alpha$. 
    \end{itemize}
\end{itemize}

\section{Missing Proofs}
\label{apdx:missing-proofs}

We first introduce the following two inequalities, which will be used in the subsequent proofs:
\begin{align}
    \langle x_i, x_j\rangle &\leq \frac{1}{2}\| x_i\|^2 + \frac{1}{2}\| x_j\|^2, \quad \text{and} \label{eq:gen-ineq-1}   \\ 
    \Big \|\sum_{i=1}^m  x_i \Big \|^2  &\leq m \sum_{i=1}^m  \Big \| x_i \big \|^2,
    \label{eq:gen-ineq-2}
\end{align}
for any set of $m$ vectors $\{x_i\}_{i{=}1}^m$ with $x_i\in\mathbb{R}^{d}$.

We also provide the following lemma, which establishes $\tilde{L}$-smoothness based on the local smoothness. 

\begin{lemma} \label{lem:local-smooth}
    For locally smooth function $f_i$, the following inequality holds:
    \begin{align}
        f_i(y) \leq f_i(x) + \langle \nabla f_i(x), y - x\rangle + \frac{\tilde{L}_{i, t}}{2} \|y - x\|^2 \quad \forall x, y \in \mathcal{S}_{i, t},
    \end{align}
    where $\mathcal{S}_{i, t} := \mathrm{conv}(\{ x_{t, k}^i \}_{k=1}^{K} \cup \{x_{t}, x_{t+1}\} )$.
\end{lemma}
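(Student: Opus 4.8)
\textbf{Proof proposal for Lemma~\ref{lem:local-smooth}.}

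The plan is to recover a descent-lemma-type inequality from a \emph{local} Lipschitz bound on the gradient of $f_i$, valid only on the convex set $\mathcal{S}_{i,t}$ rather than on all of $\mathbb{R}^d$. The key observation is that the classical descent lemma (the quadratic upper bound on a smooth function) is really a statement that only uses the Lipschitz property of $\nabla f_i$ \emph{along the segment joining $x$ and $y$}; it never needs smoothness anywhere else. So the first step is to make precise what ``local smoothness with constant $\tilde{L}_{i,t}$'' means here: there is a constant $\tilde{L}_{i,t}$ such that $\|\nabla f_i(u) - \nabla f_i(v)\| \le \tilde{L}_{i,t}\|u - v\|$ for all $u,v \in \mathcal{S}_{i,t}$. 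Since $\mathcal{S}_{i,t} = \mathrm{conv}(\{x_{t,k}^i\}_{k=1}^K \cup \{x_t, x_{t+1}\})$ is convex by construction, for any $x, y \in \mathcal{S}_{i,t}$ the entire segment $\{x + s(y-x) : s \in [0,1]\}$ lies inside $\mathcal{S}_{i,t}$, so the local Lipschitz bound applies at every point of that segment.

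Next I would run the standard argument: write $f_i(y) - f_i(x) = \int_0^1 \langle \nabla f_i(x + s(y-x)), y - x\rangle \, ds$ by the fundamental theorem of calculus along the segment (which is legitimate precisely because the segment stays in $\mathcal{S}_{i,t}$ where $f_i$ is differentiable with Lipschitz gradient). Then subtract and add $\langle \nabla f_i(x), y-x\rangle$ to get
\begin{align*}
f_i(y) - f_i(x) - \langle \nabla f_i(x), y-x\rangle = \int_0^1 \langle \nabla f_i(x + s(y-x)) - \nabla f_i(x), y - x\rangle \, ds.
\end{align*}
Bound the integrand by Cauchy--Schwarz and the local Lipschitz property: $\langle \nabla f_i(x + s(y-x)) - \nabla f_i(x), y-x\rangle \le \tilde{L}_{i,t}\, s\, \|y-x\|^2$, and integrate $\int_0^1 s\, ds = \tfrac12$ to obtain the claimed bound $f_i(y) \le f_i(x) + \langle \nabla f_i(x), y-x\rangle + \tfrac{\tilde{L}_{i,t}}{2}\|y-x\|^2$.

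The main obstacle — really more of a bookkeeping point than a genuine difficulty — is justifying the convexity/containment claim so that the integral representation and the pointwise Lipschitz bound are both valid on the segment; this is exactly why the statement is phrased in terms of the convex hull $\mathcal{S}_{i,t}$ rather than an arbitrary pair of iterates, and why the quantifier reads ``$\forall x,y \in \mathcal{S}_{i,t}$'' instead of ``$\forall x,y \in \mathbb{R}^d$''. One should also be slightly careful that $\tilde{L}_{i,t}$ is genuinely finite: this follows because $\mathcal{S}_{i,t}$ is a (compact) convex hull of finitely many points and $\nabla f_i$ is assumed differentiable/continuous there, so its modulus of continuity on this compact set is bounded — this is the sense in which $\tilde{L}_{i,t} \le \max_{i,t} \tilde{L}_{i,t}$ and, more importantly, can be much smaller than any global smoothness constant $L$. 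Everything else is the textbook computation.
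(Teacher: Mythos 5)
Your proposal is correct and follows essentially the same route as the paper's proof: both interpret local smoothness as a Lipschitz bound on $\nabla f_i$ over the bounded convex set $\mathcal{S}_{i,t}$, and then run the standard descent-lemma integration along the segment joining $x$ and $y$ (the paper phrases it via $g(\tau) = f_i(x+\tau(y-x))$ rather than your direct fundamental-theorem-of-calculus display, but the computation is identical). The only cosmetic difference is in justifying boundedness of $\mathcal{S}_{i,t}$: you observe it is the convex hull of finitely many points, while the paper invokes Assumptions~\eqref{eq:bounded-var}--\eqref{eq:bounded-grad} to argue the iterates stay bounded; just be careful that finiteness of $\tilde{L}_{i,t}$ comes from $\nabla f_i$ being locally Lipschitz (Lipschitz on bounded sets), not merely from continuity on a compact set.
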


\begin{proof}
First, notice that by Assumptions~\eqref{eq:bounded-var} and \eqref{eq:bounded-grad}, stochastic gradients are also bounded for all $x \in \mathbb{R}^d.$ Therefore, the local iterates, i.e., $\{ x_{t, k}^i \}_{k=0}^{K}$ remain bounded. 
Hence, the set $\mathcal{S}_{i, t} := \mathrm{conv}(\{ x_{t, k}^i \}_{k=1}^{K} \cup \{x_{t}, x_{t+1}\} )$, where $\mathrm{conv}(\cdot)$ denotes the convex hull, is bounded. 
Due to the fact that $f_i$ is locally smooth, $\nabla f$ is Lipschitz continuous on bounded sets. This implies there exists a positive constant $\tilde{L}_{i, t}$ such that, 
\begin{align} \label{eq:local-smooth-def}
        \| \nabla f_i(x) - \nabla f_i(y) \| \leq  \tilde{L}_{i, t} \|y - x\| \quad \forall x, y \in \mathcal{S}_{i, t}.
\end{align}
Thus, we have
\begin{align*}
    \langle \nabla f_i(y) - \nabla f_i(x), y - x\rangle &\leq \|\nabla f_i(y) - \nabla f_i(x)\| \cdot \|y - x\| \\
    &\leq \tilde{L}_{i,t} \|y - x\|^2 , \quad \forall x,y \in \mathcal{S}_{i,t}.
\end{align*}

Let $g(\tau) = f_i(x + \tau (y - x)).$ Then, we have
\begin{align*}
    g'(\tau) &= \langle \nabla f_i( x + \tau (y - x)), y - x \rangle \\ 
    g'(\tau) - g'(0) &= \langle  \nabla f_i(x + \tau(y - x)) - \nabla f_i(x), y - x   \rangle \\
    &\leq  \|\nabla f_i(x + \tau(y - x)) - \nabla f_i(x) \| \cdot \| y - x  \| \\
    &\leq \tau\tilde{L}_{i, t} \|y - x \|^2.
\end{align*}
Using the mean value theorem, we have
\begin{align*}
    f_i(y) = g(1) &= g(0) + \int_0^1 g'(\tau) d \tau \\
    &\leq g(0) + \int_0^1 [  g'(0) + \tau\tilde{L}_{i, t} \|y - x \|^2 ]d \tau \\ 
    &\leq g(0) + g'(0) + \frac{\tilde{L}_{i, t}}{2} \|y - x \|^2 \\
    &\leq f_i(x) + \langle \nabla f_i(x), y - x \rangle + \frac{\tilde{L}_{i, t}}{2} \|y - x \|^2.
\end{align*}
\end{proof}

Now, we define $\tilde{L}_i:=\max_t \tilde{L}_{i, t}$. Given this definition, 
we relate the local-smoothness constant $\tilde{L}$ of the average function $f$ with (individual) local-smoothness constants $\tilde{L}_i$ in the below lemma.

\begin{lemma} \label{lem:avg-local-smooth}
    The average of $\tilde{L}_i$-local smooth functions $f_i$ are at least $(\frac{1}{m} \sum_{i=1}^m \tilde{L}_i)$-local smooth.
\end{lemma}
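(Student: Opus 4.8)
The plan is to prove the statement directly from the definition of local smoothness given in \eqref{eq:local-smooth-def}, combined with the triangle inequality for norms. The claim is that if each $f_i$ is $\tilde{L}_i$-local smooth on the relevant bounded set, then $f = \frac{1}{m}\sum_{i=1}^m f_i$ is $\left(\frac{1}{m}\sum_{i=1}^m \tilde{L}_i\right)$-local smooth. First I would fix $x, y$ in the appropriate common bounded convex set (the intersection/union of the sets $\mathcal{S}_{i,t}$, or more precisely the set on which $f$'s local smoothness is measured), and write $\nabla f(x) - \nabla f(y) = \frac{1}{m}\sum_{i=1}^m \bigl(\nabla f_i(x) - \nabla f_i(y)\bigr)$.

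Then the key computation is a one-line estimate:
\begin{align*}
\|\nabla f(x) - \nabla f(y)\| = \Bigl\| \frac{1}{m}\sum_{i=1}^m \bigl(\nabla f_i(x) - \nabla f_i(y)\bigr) \Bigr\| \leq \frac{1}{m}\sum_{i=1}^m \|\nabla f_i(x) - \nabla f_i(y)\| \leq \frac{1}{m}\sum_{i=1}^m \tilde{L}_i \|x - y\| = \Bigl(\frac{1}{m}\sum_{i=1}^m \tilde{L}_i\Bigr)\|x-y\|,
\end{align*}
where the first inequality is the triangle inequality and the second uses $\tilde{L}_i$-local smoothness of each $f_i$ (valid since $x,y$ lie in each $\mathcal{S}_{i,t}$ by construction, as $\tilde{L}_i = \max_t \tilde{L}_{i,t}$ dominates the local constant on every such set). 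This shows $f$ is $\left(\frac{1}{m}\sum_i \tilde{L}_i\right)$-Lipschitz-gradient on that set, i.e. local smooth with constant \emph{at most} $\frac{1}{m}\sum_i \tilde{L}_i$; hence the phrasing ``at least $\left(\frac{1}{m}\sum_i\tilde{L}_i\right)$-local smooth'' (reading ``$L$-local smooth'' as ``local smooth with constant $\le L$'', so a larger admissible constant is a weaker, i.e. ``at least'', guarantee).

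The main subtlety — not really an obstacle but the one point needing care — is the domain bookkeeping: the local smoothness constant of each $f_i$ is defined relative to its own bounded set $\mathcal{S}_{i,t}$, so I would first note (as in the proof of Lemma~\ref{lem:local-smooth}) that bounded stochastic gradients keep all iterates in a common bounded region, so there is a single bounded convex set containing all the relevant points on which every $\nabla f_i$ is Lipschitz with constant $\tilde{L}_i$, and $\nabla f$ inherits Lipschitzness there with the averaged constant. Everything else is the routine triangle-inequality estimate above, so no heavy calculation is required.
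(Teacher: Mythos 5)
Your proposal is correct and matches the paper's own proof essentially verbatim: the paper likewise writes $\nabla f(x)-\nabla f(y)$ as the average of the $\nabla f_i(x)-\nabla f_i(y)$, applies the triangle inequality, and invokes each $\tilde{L}_i$-local smoothness to obtain the averaged constant (which the paper then further upper-bounds by $\tilde{L}=\max_{i,t}\tilde{L}_{i,t}$ for convenience). Your added remark on the domain bookkeeping over the common bounded set is a sensible clarification of a point the paper leaves implicit, but it does not change the argument.
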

\begin{proof}
    Starting from \eqref{eq:local-smooth-def}, we have: 
    \begin{align}
        \Big\| \frac{1}{m} \sum_{i=1}^m \nabla f_i (x) - \frac{1}{m} \sum_{i=1}^m \nabla f_i (y)  \Big\| 
        &= \Big\| \frac{1}{m} \sum_{i=1}^m \big( \nabla f_i (x) - \nabla f_i (y) \big)  \Big\|  \nonumber \\
        &\leq \frac{1}{m} \sum_{i=1}^m  \big\| \nabla f_i (x) - \nabla f_i (y) \big\| \nonumber  \\
        &\leq \frac{1}{m} \sum_{i=1}^m \tilde{L}_i \| x - y\|  \label{eq:avg-local-smooth-orig} \\
        &\leq \tilde{L} \| x - y\|, \quad\text{where}\quad \tilde{L} = \max_{i, t} \tilde{L}_{i, t}.  \label{eq:avg-local-smooth}
    \end{align}
Technically, \eqref{eq:avg-local-smooth-orig} can provide a slightly tighter bound for Theorem~\ref{thm:nonconvex}, but \eqref{eq:avg-local-smooth} simplifies the final expression.
We also have the following form:
\begin{align*}
    f(x_{t+1}) \leq f(x_t) + \langle \nabla f(x_t), x_{t+1} - x_t\rangle + \frac{\tilde{L}}{2} \|x_{t+1} - x_t\|^2,
\end{align*}
which can be obtained from Lemma \ref{lem:local-smooth}.
\end{proof}

\subsection{Proof of Algorithm \ref{alg:dist-adap-gd} under Assumption~\ref{assump:all}}
\label{sec:proof-alg1}

\begin{proof}
Based on Algorithm~\ref{alg:dist-adap-gd}, we use the following notations throughout the proof:
\begin{align*}
    x_{t,0}^i &= x_t,\\
    x_{t,k}^i &= x_{t,k-1}^i - \eta_{t,k}^i \tilde{\nabla} f_i (x_{t,k-1}^i), \quad \forall k \in [K],\\
    x_{t+1} &= \frac{1}{|\mathcal{S}_t|} \sum_{i\in\mathcal{S}_t} x_{t,k}^i,
\end{align*}
where we denoted with $x_{t+1}$ as the server parameter at round $t+1$, which is the average of the local parameters $x_{t,k}^i$ for all client $\forall~i \in [m]$ after running $k$ local iterations for $\forall~k \in [K]$.

We also recall that we use $\tilde{\nabla}f_i(x) = \frac{1}{|\mathcal{B}|} \sum_{z\in\mathcal{B}} \nabla F_i(x,z)$ to denote the stochastic gradients with batch size $|\mathcal{B}| = b$. Note that whenever we use this shorthand notation in the theory, we imply that we draw a new independent batch of samples to compute the stochastic gradients.

We start from the smoothness of $f(\cdot)$, which we note again is obtained via Lemma~\ref{lem:avg-local-smooth}, i.e., as a byproduct of our analysis, and we do not assume that $f_i$'s are \textit{globally} $L$-smooth. Also note that Lemma~\ref{lem:avg-local-smooth} holds for $\{x_0, x_1, \dots \}$, i.e., the average of the local parameters visited by Algorithm~\ref{alg:dist-adap-gd}. 

Proceeding, we have
\begin{align*}
    f(x_{t+1}) \leq f(x_t) - \Big\langle \nabla f(x_t), \frac{1}{|\mathcal{S}_t|}\sum_{i\in\mathcal{S}_t} \sum_{k=0}^{K-1} \eta_{t,k}^i \tilde{\nabla}f_i(x_{t,k}^i)\Big\rangle + \frac{\tilde{L}}{2} \Big\| \frac{1}{|\mathcal{S}_t|}\sum_{i\in\mathcal{S}_t} \sum_{k=0}^{K-1} \eta_{t,k}^i \tilde{\nabla}f_i(x_{t,k}^i) \Big\|^2.
\end{align*}
Taking expectation from the expression above, where we overload the notation such that the expectation is both with respect to the client sampling randomness (indexed with $i$) as well as the data sampling randomness (indexed with $z$), that is
    $\mathbb{E}(\cdot) := \mathbb{E}_{\mathcal{F}_t} \left[  \mathbb{E}_{i} \left[ \mathbb{E}_{z} [ \cdot \mid \mathcal{F}_t, i ] | \mathcal{F}_t\right]  \right],$
where $\mathcal{F}_t$ is the natural filtration, we have:

\begin{align}
    &\mathbb{E}f(x_{t+1}) \nonumber \\  
    &\leq f(x_t) - \mathbb{E} \Big \langle \nabla f(x_t), \frac{1}{|\mathcal{S}_t|}\sum_{i\in\mathcal{S}_t} \sum_{k=0}^{K-1} \eta_{t,k}^i \tilde{\nabla}f_i(x_{t,k}^i)\Big\rangle + \frac{\tilde{L}}{2} \mathbb{E} \Big\| \frac{1}{|\mathcal{S}_t|}\sum_{i\in\mathcal{S}_t} \sum_{k=0}^{K-1} \eta_{t,k}^i \tilde{\nabla}f_i(x_{t,k}^i) \Big\|^2\label{eq:nonconvex-2} \\
    &\leq f(x_t) - \mathbb{E}\Big\langle \nabla f(x_t), \frac{1}{m}\sum_{i=1}^{m} \sum_{k=0}^{K-1} \eta_{t,k}^i \nabla f_i(x_{t,k}^i)\Big\rangle + \frac{\tilde{L}}{2m} \sum_{i=1}^{m}\,\mathbb{E} \Big\| \sum_{k=0}^{K-1} \eta_{t,k}^i \tilde{\nabla}f_i(x_{t,k}^i) \Big\|^2\label{eq:nonconvex-3}\\
    &= f(x_t) - D_t\,\| \nabla f(x_t)\|^2 - D_t\,\mathbb{E}\Big\langle \nabla f(x_t), \frac{1}{m}\sum_{i=1}^{m} \sum_{k=0}^{K-1} \frac{\eta_{t,k}^i}{D_t} \Big(\nabla f_i(x_{t,k}^i)-\nabla f(x_{t})\Big)\Big\rangle\nonumber\\
    &\hspace{4cm}+ \frac{\tilde{L}}{2m} \sum_{i=1}^{m}\, \mathbb{E} \Big\| \sum_{k=0}^{K-1} \eta_{t,k}^i \tilde{\nabla}f_i(x_{t,k}^i) \Big\|^2\label{eq:nonconvex-4},
\end{align}
where in the equality, we added and subtracted $\frac{1}{m}\sum_{i=1}^{m} \sum_{k=0}^{K-1} \eta_{t,k}^i \nabla f (x_t)$ in the inner product term, and also replaced $D_t := \frac{1}{m}\sum_{i=1}^{m}\sum_{k=0}^{K-1} \eta_{t,k}^i$. Then, using \eqref{eq:gen-ineq-1} and \eqref{eq:nonconvex-2}-\eqref{eq:nonconvex-4}, we have
\begin{align}
\mathbb{E}f(x_{t+1}) &\leq f(x_t) - D_t\,\| \nabla f(x_t)\|^2 + \frac{D_t}{2}\,\| \nabla f(x_t)\|^2 \label{eq:nonconvex-5} \\
&\quad + \frac{D_t}{2}\,\mathbb{E}\Big\|\frac{1}{m}\sum_{i=1}^{m} \sum_{k=0}^{K-1} \frac{\eta_{t,k}^i}{D_t} \Big(\nabla f_i(x_{t,k}^i)-\nabla f(x_{t})\Big)\Big\|^2 + \frac{\tilde{L}}{2m} \sum_{i=1}^{m}\,\mathbb{E}\Big\| \sum_{k=0}^{K-1} \eta_{t,k}^i \tilde{\nabla}f_i(x_{t,k}^i) \Big\|^2 \nonumber \\
&= f(x_t) - \frac{D_t}{2}\,\| \nabla f(x_t)\|^2 + \frac{1}{2D_t}\,\mathbb{E}\Big\|\frac{1}{m}\sum_{i=1}^{m} \sum_{k=0}^{K-1} \eta_{t,k}^i \Big(\nabla f_i(x_{t,k}^i)-\nabla f(x_{t})\Big)\Big\|^2\nonumber\\
&\quad + \frac{\tilde{L}}{2m} \sum_{i=1}^{m}\,\mathbb{E}\Big\| \sum_{k=0}^{K-1} \eta_{t,k}^i \tilde{\nabla}f_i(x_{t,k}^i) \Big\|^2\label{eq:nonconvex-6}\\
&= f(x_t) - \frac{D_t}{2}\,\| \nabla f(x_t)\|^2\nonumber\\
&\quad+ \frac{1}{2D_t}\,\mathbb{E}\Big\|\frac{1}{m}\sum_{i=1}^{m} \sum_{k=0}^{K-1} \eta_{t,k}^i \Big(\nabla f_i(x_t)-\nabla f_i(x_{t,k}^i)+\nabla f_i(x_t)-\nabla f(x_{t})\Big)\Big\|^2\nonumber\\
&\quad+ \frac{\tilde{L}}{2m} \sum_{i=1}^{m}\,\mathbb{E}\Big\| \sum_{k=0}^{K-1} \eta_{t,k}^i \Big(\tilde{\nabla}f_i(x_{t,k}^i) - \nabla f_i(x_{t,k}^i) + \nabla f_i(x_{t,k}^i) - \nabla f_i(x_t)\nonumber\\
&\qquad\qquad\qquad\qquad\qquad+ \nabla f_i(x_t) - \nabla f(x_t) + \nabla f(x_t) \Big)\Big\|^2 \label{eq:nonconvex-7}
\end{align}
We now expand the terms in \eqref{eq:nonconvex-7} and bound each term separately, as follows.
\begin{align}    
\mathbb{E}f(x_{t+1})
&\leq f(x_t) - \frac{D_t}{2}\,\| \nabla f(x_t)\|^2\nonumber\\
&\quad+ \underbrace{\frac{1}{D_t}\,\mathbb{E}\Big\|\frac{1}{m}\sum_{i=1}^{m} \sum_{k=0}^{K-1} \eta_{t,k}^i \Big(\nabla f_i(x_t)-\nabla f_i(x_{t,k}^i)\Big)\Big\|^2}_{\mathrm{A}_1}\nonumber\\ %
&\quad+ \underbrace{\frac{1}{D_t}\,\mathbb{E}\Big\|\frac{1}{m}\sum_{i=1}^{m} \sum_{k=0}^{K-1} \eta_{t,k}^i \Big(\nabla f_i(x_t)-\nabla f(x_{t})\Big)\Big\|^2}_{\mathrm{A}_2}\nonumber\\
&\quad+ \underbrace{\frac{2\tilde{L}}{m} \sum_{i=1}^{m}\,\mathbb{E}\Bigg\| \sum_{k=0}^{K-1} \eta_{t,k}^i \Big(\tilde{\nabla}f_i(x_{t,k}^i) - \nabla f_i(x_{t,k}^i)\Big)\Bigg\|^2}_{\mathrm{A}_3}\nonumber\\
&\quad+ \underbrace{\frac{2\tilde{L}}{m} \sum_{i=1}^{m}\,\mathbb{E}\Bigg\| \sum_{k=0}^{K-1} \eta_{t,k}^i \Big(\nabla f_i(x_{t,k}^i) - \nabla f_i(x_t) \Big)\Bigg\|^2}_{\mathrm{A}_4}\nonumber\\
&\quad+ \underbrace{\frac{2\tilde{L}}{m} \sum_{i=1}^{m}\,\mathbb{E}\Bigg\| \sum_{k=0}^{K-1} \eta_{t,k}^i \Big(\nabla f_i(x_t) - \nabla f(x_t)\Big)\Bigg\|^2}_{\mathrm{A}_5}\nonumber\\
&\quad+ \underbrace{\frac{2\tilde{L}}{m} \sum_{i=1}^{m}\,\mathbb{E}\Bigg\| \sum_{k=0}^{K-1} \eta_{t,k}^i \nabla f(x_t) \Bigg\|^2}_{\mathrm{A}_6}.\label{eq:nonconvex-8}
\end{align}
Now, we provide an upper bound for each term in \eqref{eq:nonconvex-8}. Starting with $\mathrm{A}_1,$ using \eqref{eq:gen-ineq-1}, \eqref{eq:gen-ineq-2}, and \eqref{eq:avg-local-smooth}, we have
\begin{align}
\mathrm{A}_1 &\leq \frac{K}{D_t m} \sum_{i=1}^{m}\sum_{k=0}^{K-1} (\eta_{t,k}^i)^2 \,\mathbb{E}\Big\| \nabla f_i(x_t)-\nabla f_i(x_{t,k}^i) \Big\|^2\label{eq:nonconvex-9}\\
&\leq \frac{K\tilde{L}^2}{D_t m} \sum_{i=1}^{m}\sum_{k=0}^{K-1} (\eta_{t,k}^i)^2 \,\mathbb{E}\Big\| x_t-x_{t,k}^i \Big\|^2  \nonumber \\
&= \frac{K\tilde{L}^2}{D_t m} \sum_{i=1}^{m}\sum_{k=0}^{K-1} (\eta_{t,k}^i)^2 \,\mathbb{E}\Big\| \sum_{\ell=0}^{k-1} \eta_{t,l}^i \tilde{\nabla}f_i(x_{t,\ell}^i)\Big\|^2 \nonumber \\
&= \frac{K\tilde{L}^2}{D_t m} \sum_{i=1}^{m}\sum_{k=0}^{K-1} (\eta_{t,k}^i)^2 \,\mathbb{E}\Big\| \sum_{\ell=0}^{k-1} \eta_{t,l}^i \Big(\tilde{\nabla}f_i(x_{t,\ell}^i) - \nabla f_i(x_{t,\ell}^i) + \nabla f_i(x_{t,\ell}^i)\Big)\Big\|^2 \nonumber \\
&\leq \frac{2K\tilde{L}^2}{D_t m} \sum_{i=1}^{m}\sum_{k=0}^{K-1} (\eta_{t,k}^i)^2 \Big[\mathbb{E}\Big\| \sum_{\ell=0}^{k-1} \eta_{t,l}^i \Big(\tilde{\nabla}f_i(x_{t,\ell}^i) - \nabla f_i(x_{t,\ell}^i) \Big)\Big\|^2 + \,\mathbb{E}\Big\| \sum_{\ell=0}^{k-1} \eta_{t,l}^i \nabla f_i(x_{t,\ell}^i)\Big\|^2\Big] \nonumber \\
&\leq \frac{2K\tilde{L}^2}{D_t m} \sum_{i=1}^{m}\sum_{k=0}^{K-1} k(\eta_{t,k}^i)^2 \sum_{\ell=0}^{k-1}(\eta_{t,l}^i)^2\Big[\mathbb{E}\Big\| \tilde{\nabla}f_i(x_{t,\ell}^i) - \nabla f_i(x_{t,\ell}^i) \Big\|^2 + \mathbb{E}\Big\| \nabla f_i(x_{t,\ell}^i)\Big\|^2\Big] \nonumber \\
&\leq \frac{2K\tilde{L}^2}{D_t m} \sum_{i=1}^{m}\sum_{k=0}^{K-1} k(\eta_{t,k}^i)^2 \sum_{\ell=0}^{k-1}(\eta_{t,l}^i)^2\Big(\frac{\sigma^2}{b} + G^2\Big) \nonumber \\
&\leq \frac{2K^2\tilde{L}^2\Big(\frac{\sigma^2}{b} + G^2\Big)}{D_t m} \sum_{i=1}^{m}\Big(\sum_{k=0}^{K-1} (\eta_{t,k}^i)^2\Big)^2. \label{eq:nonconvex-10}
\end{align}

For $\mathrm{A}_2,$ using \eqref{eq:gen-ineq-1} and \eqref{eq:gen-ineq-2}, we have:
\begin{align}
\mathrm{A}_2 &\leq \frac{K}{D_t m} \sum_{i=1}^{m}\sum_{k=0}^{K-1} (\eta_{t,k}^i)^2 \,\mathbb{E}\Big\| \nabla f_i(x_t)-\nabla f(x_t) \Big\|^2  \nonumber \\  
&\leq \frac{K\rho}{D_t m} \Big\| \nabla f(x_t) \Big\|^2 \sum_{i=1}^{m}\sum_{k=0}^{K-1} (\eta_{t,k}^i)^2. \label{eq:nonconvex-11}
\end{align}

For $\mathrm{A}_3$, using \eqref{eq:gen-ineq-1}, \eqref{eq:gen-ineq-2}, and \eqref{eq:avg-local-smooth}, we have:
\begin{align}
\mathrm{A}_3 &\leq \frac{2K\tilde{L}}{m} \sum_{i=1}^{m}\sum_{k=0}^{K-1} (\eta_{t,k}^i)^2 \,\mathbb{E}\Big\| \tilde{\nabla} f_i(x_{t,k}^i)-\nabla f_i(x_{t,k}^i) \Big\|^2 \nonumber \\
&\leq \frac{2K\tilde{L}\sigma^2}{mb} \sum_{i=1}^{m}\sum_{k=0}^{K-1} (\eta_{t,k}^i)^2. 
 \label{eq:nonconvex-12}
\end{align}

For $\mathrm{A}_4,$ using \eqref{eq:gen-ineq-1}, \eqref{eq:gen-ineq-2}, and \eqref{eq:avg-local-smooth}, we have:
\begin{align}
\mathrm{A}_4 &\leq \frac{2K\tilde{L}}{m} \sum_{i=1}^{m}\sum_{k=0}^{K-1} (\eta_{t,k}^i)^2 \,\mathbb{E}\Big\| \nabla f_i(x_t)-\nabla f_i(x_{t,k}^i) \Big\|^2\label{eq:nonconvex-13}\\
&\leq \frac{2K\tilde{L}^3}{m} \sum_{i=1}^{m}\sum_{k=0}^{K-1} (\eta_{t,k}^i)^2 \,\mathbb{E}\Big\| x_t-x_{t,k}^i \Big\|^2  \nonumber \\
&= \frac{2K\tilde{L}^3}{m} \sum_{i=1}^{m}\sum_{k=0}^{K-1} (\eta_{t,k}^i)^2 \,\mathbb{E}\Big\| \sum_{\ell=0}^{k-1} \eta_{t,l}^i \tilde{\nabla}f_i(x_{t,\ell}^i)\Big\|^2  \nonumber \\
&= \frac{2K\tilde{L}^3}{m} \sum_{i=1}^{m}\sum_{k=0}^{K-1} (\eta_{t,k}^i)^2 \,\mathbb{E}\Big\| \sum_{\ell=0}^{k-1} \eta_{t,l}^i \Big(\tilde{\nabla}f_i(x_{t,\ell}^i) - \nabla f_i(x_{t,\ell}^i) + \nabla f_i(x_{t,\ell}^i)\Big)\Big\|^2  \nonumber \\
&\leq \frac{4K\tilde{L}^3}{m} \sum_{i=1}^{m}\sum_{k=0}^{K-1} (\eta_{t,k}^i)^2 \Big[\mathbb{E}\Big\| \sum_{\ell=0}^{k-1} \eta_{t,l}^i \Big(\tilde{\nabla}f_i(x_{t,\ell}^i) - \nabla f_i(x_{t,\ell}^i) \Big)\Big\|^2 + \,\mathbb{E}\Big\| \sum_{\ell=0}^{k-1} \eta_{t,l}^i \nabla f_i(x_{t,\ell}^i)\Big\|^2\Big]  \nonumber \\
&\leq \frac{4K\tilde{L}^3}{m} \sum_{i=1}^{m}\sum_{k=0}^{K-1} k(\eta_{t,k}^i)^2 \sum_{\ell=0}^{k-1}(\eta_{t,l}^i)^2\Big[\mathbb{E}\Big\| \tilde{\nabla}f_i(x_{t,\ell}^i) - \nabla f_i(x_{t,\ell}^i) \Big\|^2 + \mathbb{E}\Big\| \nabla f_i(x_{t,\ell}^i)\Big\|^2\Big]  \nonumber \\
&\leq \frac{4K\tilde{L}^3}{m} \sum_{i=1}^{m}\sum_{k=0}^{K-1} k(\eta_{t,k}^i)^2 \sum_{\ell=0}^{k-1}(\eta_{t,l}^i)^2\Big(\frac{\sigma^2}{b} + G^2\Big) \nonumber \\
&\leq \frac{4K^2\tilde{L}^3\Big(\frac{\sigma^2}{b} + G^2\Big)}{m} \sum_{i=1}^{m}\Big(\sum_{k=0}^{K-1} (\eta_{t,k}^i)^2\Big)^2.   \label{eq:nonconvex-14}
\end{align}

For $\mathrm{A}_5,$ using \eqref{eq:gen-ineq-1}, \eqref{eq:gen-ineq-2}, and \eqref{eq:avg-local-smooth}, we have:
\begin{align}
\mathrm{A}_5 &\leq \frac{2K\tilde{L}}{m} \sum_{i=1}^{m}\sum_{k=0}^{K-1} (\eta_{t,k}^i)^2 \,\mathbb{E}\Big\| \nabla f_i(x_t)-\nabla f(x_t) \Big\|^2  \nonumber \\ 
&\leq \frac{2K\tilde{L}\rho}{m} \Big\| \nabla f(x_t) \Big\|^2 \sum_{i=1}^{m}\sum_{k=0}^{K-1} (\eta_{t,k}^i)^2.   \label{eq:nonconvex-15}
\end{align}

For $\mathrm{A}_6,$ using \eqref{eq:gen-ineq-1}, \eqref{eq:gen-ineq-2}, and \eqref{eq:avg-local-smooth}, we have:
\begin{align}
\mathrm{A}_6 &\leq \frac{2K\tilde{L}}{m} \Big\| \nabla f(x_t) \Big\|^2\sum_{i=1}^{m}\sum_{k=0}^{K-1} (\eta_{t,k}^i)^2.\label{eq:nonconvex-16}
\end{align}

\paragraph{Putting all bounds together.}
Now, by putting \eqref{eq:nonconvex-9}--\eqref{eq:nonconvex-16} back into \eqref{eq:nonconvex-8} and rearranging the terms, we obtain the following inequality:
\begin{align}
&\Big(\frac{D_t}{2} - \frac{K\rho}{D_t m}\sum_{i=1}^{m}\sum_{k=0}^{K-1} (\eta_{t,k}^i)^2 - \frac{2K\tilde{L}(1+\rho)}{m} \sum_{i=1}^{m}\sum_{k=0}^{K-1} (\eta_{t,k}^i)^2\Big)\| \nabla f(x_t)\|^2 \nonumber\\
&\qquad\qquad\leq f(x_t) - f(x_{t+1}) + \frac{2K^2\tilde{L}^2\Big(\frac{\sigma^2}{b} + G^2\Big)}{D_t m} \sum_{i=1}^{m}\Big(\sum_{k=0}^{K-1} (\eta_{t,k}^i)^2\Big)^2\nonumber\\
&\qquad\qquad\qquad + \frac{2K\tilde{L}\sigma^2}{mb} \sum_{i=1}^{m}\sum_{k=0}^{K-1} (\eta_{t,k}^i)^2  + \frac{4K^2\tilde{L}^3\Big(\frac{\sigma^2}{b} + G^2\Big)}{m} \sum_{i=1}^{m}\Big(\sum_{k=0}^{K-1} (\eta_{t,k}^i)^2\Big)^2.\label{eq:nonconvex-17}
\end{align}
Now, it remains to notice that, by definition, $\eta_{t,k}^i=\mathcal{O}(\gamma)$, for all $i\in[m]$, $k\in\{0,1,\dots K{-}1\}$, and $t\geq 0$. Therefore, $D_t = \mathcal{O}(\gamma K)$. Thus, by dividing the above inequality by $\frac{D_t}{2}$, we have:
\begin{align}
&\Big(1 - \underbrace{\frac{2K\rho}{D_t^2 m}\sum_{i=1}^{m}\sum_{k=0}^{K-1} (\eta_{t,k}^i)^2}_{\mathcal{O}(1)} ~ - ~ \underbrace{\frac{4K\tilde{L}(1+\rho)}{D_t m} \sum_{i=1}^{m}\sum_{k=0}^{K-1} (\eta_{t,k}^i)^2}_{\mathcal{O}(\gamma K)}\Big)\| \nabla f(x_t)\|^2 \nonumber\\
&\qquad\qquad\leq \underbrace{\frac{2(f(x_t) - f(x_{t+1}))}{D_t}}_{\mathcal{O}\Big(\frac{1}{\gamma K}\Big)} ~+~ \underbrace{\frac{4K\tilde{L}\sigma^2}{D_tmb} \sum_{i=1}^{m}\sum_{k=0}^{K-1} (\eta_{t,k}^i)^2}_{\mathcal{O}(\gamma K)} \nonumber\\
&\qquad\qquad\qquad + \underbrace{\frac{4K^2\tilde{L}^2\Big(\frac{\sigma^2}{b} + G^2\Big)}{D_t^2 m} \sum_{i=1}^{m}\Big(\sum_{k=0}^{K-1} (\eta_{t,k}^i)^2\Big)^2}_{\mathcal{O}(\gamma^2 K^2)}\nonumber\\
&\qquad\qquad\qquad + \underbrace{\frac{8K^2\tilde{L}^3\Big(\frac{\sigma^2}{b} + G^2\Big)}{D_t m} \sum_{i=1}^{m}\Big(\sum_{k=0}^{K-1} (\eta_{t,k}^i)^2\Big)^2}_{\mathcal{O}(\gamma^3 K^3)}, 
 \label{eq:nonconvex-18}
\end{align}
for $\rho = \mathcal{O}(1)$. For the simplicity of exposition, assuming $\rho \leq \frac{1}{4}$ and $T\geq (5/\rho)^2 = 400$, we obtain the below by averaging \eqref{eq:nonconvex-18} for $t=0,1,\dots, T{-}1$:
\begin{align}
\frac{1}{T}\sum_{t=0}^{T{-}1}\| \nabla f(x_t)\|^2 &\leq \mathcal{O}\Big(\frac{f(x_0)-f^\star}{\gamma K T}\Big) + \mathcal{O}\Big(\frac{\gamma K \sigma^2}{b}\Big) + \mathcal{O}\Big(\tilde{L}^2\Big(\frac{\sigma^2}{b} + G^2\Big)\gamma^2 K^2\Big) \nonumber\\
&\quad+ \mathcal{O}\Big(\tilde{L}^3\Big(\frac{\sigma^2}{b} + G^2\Big)\gamma^3 K^3\Big). \label{eq:nonconvex-19}
\end{align}
Finally, by choosing $\gamma = \mathcal{O}\Big(\frac{1}{K\sqrt{T}}\Big)$, and defining $\Psi_1=\max\left\{\frac{\sigma^2}{b},f(x_0)-f(x^\star)\right\}$ and $\Psi_2=\left(\frac{\sigma^2}{b}+G^2\right)$ with $b = |\mathcal{B}|$ being the batch size,
we arrive at the statement in Theorem \ref{thm:nonconvex}, i.e.,
\begin{align*}
\frac{1}{T}\sum_{t=0}^{T-1} \,\mathbb{E}\big\|\nabla f(x_t)\big\|^2 &\leq  \mathcal{O}\Big(\frac{\Psi_1}{\sqrt{T}}\Big) + \mathcal{O}\Big(\frac{\tilde{L}^2 \Psi_2}{T}\Big) + \mathcal{O}\Big(\frac{\tilde{L}^3 \Psi_2}{\sqrt{T^3}}\Big). 
\end{align*}

\end{proof}

\subsection{Proof of Algorithm~\ref{alg:dist-adap-gd} for convex case}
\label{sec:proof-convex}
Here, we provide an extension of the proof of \citet[Theorem~1]{malitsky2019adaptive} for the distributed version. Note that the proof we provide here is not exactly the same version of $\Delta$-SGD presented in Algorithm~\ref{alg:dist-adap-gd}; in particular, we assume no local iterations, i.e., $k=1$ for all $i \in [m]$, and every client participate, i.e, $p=1$, without stochastic gradients.

We start by constructing a Lyapunov function for the distributed objective in \eqref{eq:obj}.
\begin{lemma}
Let $f_i: \mathbb{R}^d \rightarrow \mathbb{R}$ be convex with locally Lipschitz gradients. Then, the sequence $\{x_t\}$ generated by Algorithm~\ref{alg:dist-adap-gd}, assuming $k=1$ and $p=1$ with full batch, satisfy the following:
\begin{align}
     &\| x_{t+1} - x^\star \|^2 + \frac{1}{2m} \sum_{i=1}^m  \| x_{t+1}^i - x_t^i  \|^2   + \frac{2}{m} \sum_{i=1}^m \Big[ \eta_t^i (1 + \theta_t^i) \big( f_i(x_t^i) - f_i(x^\star) \big) \Big]  \nonumber  \\
     &\qquad\leq 
     \| x_t - x^\star \|^2  + \frac{1}{2m} \sum_{i=1}^m \| x_t^i - x_{t-1}^i  \|^2  + \frac{2}{m} \sum_{i=1}^m \Big[ \eta_t^i \theta_t^i \big(  f_i(x_{t-1}^i) - f_i(x^\star) \big)  \Big]. \label{eq:lyapunov}
\end{align}
\end{lemma}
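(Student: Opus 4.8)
The plan is to follow the classical analysis of the adaptive golden-ratio / Malitsky-type step size, carried out per client and then averaged over clients. The key object is the descent-type inequality that comes from using the step size rule $\eta_t^i \le \|x_t^i - x_{t-1}^i\| / (2\|\nabla f_i(x_t^i) - \nabla f_i(x_{t-1}^i)\|)$, which by rearranging gives the ``local co-coercivity-like'' bound $\|\nabla f_i(x_t^i) - \nabla f_i(x_{t-1}^i)\| \le \tfrac{1}{2\eta_t^i}\|x_t^i - x_{t-1}^i\|$. Since $p=1$, $k=1$, and full batch, we have the clean recursion $x_{t+1}^i = x_t^i - \eta_t^i \nabla f_i(x_t^i)$ and $x_{t+1} = \tfrac1m\sum_i x_{t+1}^i$, with $x_t^i$ initialized at the server iterate $x_t$ each round (or, in the simplified single-local-step version Malitsky-style, tracked directly).

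\textbf{Key steps.} First I would expand $\|x_{t+1} - x^\star\|^2$. Because $x_{t+1}$ is the average of the $x_{t+1}^i$, I use convexity of $\|\cdot\|^2$ (inequality~\eqref{eq:gen-ineq-2} in the normalized form, i.e. $\|\tfrac1m\sum a_i\|^2 \le \tfrac1m\sum\|a_i\|^2$) to get $\|x_{t+1}-x^\star\|^2 \le \tfrac1m\sum_i \|x_{t+1}^i - x^\star\|^2$. Then for each client, expand $\|x_{t+1}^i - x^\star\|^2 = \|x_t^i - x^\star\|^2 - 2\eta_t^i\langle \nabla f_i(x_t^i), x_t^i - x^\star\rangle + (\eta_t^i)^2\|\nabla f_i(x_t^i)\|^2$. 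Second, handle the cross term using convexity of $f_i$: $\langle \nabla f_i(x_t^i), x_t^i - x^\star\rangle \ge f_i(x_t^i) - f_i(x^\star)$, and split off part of it to also absorb a $\langle\nabla f_i(x_t^i), x_t^i - x_{t-1}^i\rangle$-type term needed to build the telescoping function-value terms $\eta_t^i\theta_t^i(f_i(x_{t-1}^i) - f_i(x^\star))$; here one uses $f_i(x_{t-1}^i) \ge f_i(x_t^i) + \langle\nabla f_i(x_t^i), x_{t-1}^i - x_t^i\rangle$. Third, bound the gradient-squared term $(\eta_t^i)^2\|\nabla f_i(x_t^i)\|^2$: write $\nabla f_i(x_t^i) = \nabla f_i(x_{t-1}^i) + (\nabla f_i(x_t^i) - \nabla f_i(x_{t-1}^i))$, use the step-size inequality on the difference term, and control $\|\nabla f_i(x_{t-1}^i)\|$ via the previous step $x_t^i - x_{t-1}^i = -\eta_{t-1}^i\nabla f_i(x_{t-1}^i)$, so that $(\eta_t^i)^2\|\nabla f_i(x_{t-1}^i)\|^2 = (\theta_t^i)^2\|x_t^i - x_{t-1}^i\|^2$ (recalling $\theta_t^i = \eta_t^i/\eta_{t-1}^i$). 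Fourth, collect the $\|x_{t+1}^i - x_t^i\|^2$, $\|x_t^i - x_{t-1}^i\|^2$ coefficients and the function-value terms, and verify that the second part of the $\min$ in the step size, $\eta_t^i \le \sqrt{1+\theta_{t-1}^i}\,\eta_{t-1}^i$, exactly guarantees the coefficient inequalities (this is where the $\sqrt{1+\theta}$ factor is designed to make the telescoping work — it controls how $(\theta_t^i)^2$ compares to $1 + \theta_{t-1}^i$). Summing/averaging the per-client inequalities over $i$ then yields \eqref{eq:lyapunov}.

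\textbf{Main obstacle.} The delicate part is the bookkeeping in step four: showing that the coefficients of $\|x_{t+1}^i - x_t^i\|^2$ and $\|x_t^i - x_{t-1}^i\|^2$, and the function-value multipliers $\eta_t^i(1+\theta_t^i)$ versus $\eta_{t+1}^i\theta_{t+1}^i$, line up so the inequality telescopes cleanly across $t$. This requires using both branches of the $\min$ defining $\eta_t^i$ at the right moments and tracking the ratio identities $\theta_t^i = \eta_t^i/\eta_{t-1}^i$ carefully; it is essentially a distributed replay of \citet[Theorem~1]{malitsky2019adaptive}, but one must check that averaging over clients does not break any of the sign conditions (it does not, since every manipulation is done per client before averaging, and $\|\cdot\|^2$ convexity only helps on the $\|x_{t+1}-x^\star\|^2$ term). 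A secondary subtlety is the treatment of the boundary/first iteration, where $\eta_{-1}^i$ or $\theta_0^i$ must be defined consistently — but the statement is about the generic step $t$, so this can be handled by the initialization remarks already made after Algorithm~\ref{alg:dist-adap-gd}.
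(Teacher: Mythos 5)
Your overall plan is the right family of argument (a per-client replay of Malitsky--Mishchenko, averaged over clients), and several ingredients match the paper's proof: the first branch of the $\min$ giving $\eta_t^i\|\nabla f_i(x_t^i)-\nabla f_i(x_{t-1}^i)\|\le \tfrac12\|x_t^i-x_{t-1}^i\|$, convexity at $x^\star$, and the previous-step identity $x_t^i-x_{t-1}^i=-\eta_{t-1}^i\nabla f_i(x_{t-1}^i)$. But the crucial bookkeeping step is done in a way that does not close. In your third step you propose to bound the quadratic term by splitting $\nabla f_i(x_t^i)=\nabla f_i(x_{t-1}^i)+(\nabla f_i(x_t^i)-\nabla f_i(x_{t-1}^i))$ and then \emph{squaring} the first piece, producing $(\eta_t^i)^2\|\nabla f_i(x_{t-1}^i)\|^2=(\theta_t^i)^2\|x_t^i-x_{t-1}^i\|^2$. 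The claimed inequality only has $\tfrac12\|x_t^i-x_{t-1}^i\|^2$ available on the right-hand side, and $(\theta_t^i)^2$ is typically of order $1$ or larger, so no choice of Young/Cauchy constants absorbs such a term into the stated coefficients; once you square, you have also destroyed the structure needed to trade this piece for function values. The paper (following Malitsky--Mishchenko) instead keeps the previous-gradient contribution as an inner product, $2\eta_t^i\langle\nabla f_i(x_{t-1}^i),\,x_{t+1}^i-x_t^i\rangle=2\theta_t^i\eta_t^i\langle x_{t-1}^i-x_t^i,\,\nabla f_i(x_t^i)\rangle$, and converts it by convexity into $2\theta_t^i\eta_t^i\big(f_i(x_{t-1}^i)-f_i(x_t^i)\big)$; combined with $\tfrac12\|x_t^i-x_{t-1}^i\|^2+\tfrac12\|x_{t+1}^i-x_t^i\|^2$ from the gradient-difference piece, this is exactly what yields the multipliers $\eta_t^i(1+\theta_t^i)$ and $\eta_t^i\theta_t^i$ in the statement. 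Relatedly, your fourth step asserts that the second branch of the $\min$, $\eta_t^i\le\sqrt{1+\theta_{t-1}^i}\,\eta_{t-1}^i$, is what makes the coefficients of this lemma line up (via $(\theta_t^i)^2\le 1+\theta_{t-1}^i$). That is a misplacement: the one-step Lyapunov inequality uses only the first branch; the second branch enters only afterwards, to guarantee $\eta_{t+1}^i\theta_{t+1}^i\le\eta_t^i(1+\theta_t^i)$ so that the estimate telescopes across rounds in the convex convergence theorem (nonnegativity of the weights $\alpha_k^i$). Using it inside this lemma in the way you describe would not recover the stated constants.

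A secondary caution concerns your very first move, $\|x_{t+1}-x^\star\|^2\le\frac1m\sum_i\|x_{t+1}^i-x^\star\|^2$ followed by per-client expansion: this leaves $\frac1m\sum_i\|x_t^i-x^\star\|^2$ on the right, which is \emph{at least} $\|x_t-x^\star\|^2$, so it only matches the claimed right-hand side when $x_t^i=x_t$ for all $i$ (the $k=1$ identification the paper also invokes). The paper sidesteps this by expanding $\|x_{t+1}-x^\star\|^2=\|x_t-x^\star-\frac1m\sum_i\eta_t^i\nabla f_i(x_t^i)\|^2$ around the server iterate, keeping $\|x_t-x^\star\|^2$ exactly, and applying the averaging inequality only to the displacement term $\|\frac1m\sum_i(x_t^i-x_{t+1}^i)\|^2\le\frac1m\sum_i\|x_t^i-x_{t+1}^i\|^2$. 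In short: adopt the paper's inner-product-plus-convexity treatment of the previous-gradient term (rather than squaring it), and reserve the $\sqrt{1+\theta}$ condition for the cross-round telescoping; with those corrections your outline becomes the paper's proof.
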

The above lemma, which we prove below, constructs a contracting Lyapunov function, which can be seen from the second condition on the step size $\eta_t^i:$
\begin{align*}
    \eta_{t+1}^i \leq \sqrt{1 + \theta_t^i} \eta_t^i \implies \eta_{t+1}^i \theta_{t+1}^i \leq (1+\theta_t^i) \eta_t^i.
\end{align*}

\begin{proof}
We denote $x_t = \frac{1}{m} \sum_{i=1}^m x_t^i.$
\begin{align}
    \| x_{t+1} - x^\star \|^2 &= \| x_t - \frac{1}{m} \sum_{i=1}^m \eta_t^i \nabla f_i (x_t^i) - x^\star \| \\
    &= \| x_t - x^\star \|^2 + \Big\| \frac{1}{m} \sum_{i=1}^m \eta_t^i \nabla f_i (x_t^i) \Big\|^2 - 2 \Big \langle  x_t - x^\star,  \frac{1}{m} \sum_{i=1}^m \eta_t^i \nabla f_i (x_t^i) \Big \rangle \label{eq:decomp}
\end{align}
We will first bound the second term in \eqref{eq:decomp}. Using $\| \sum_{i=1}^m a_i \|^2 \leq m \cdot \sum_{i=1}^m \| a_i \|^2,$ we have
\begin{align}
    \Big\| \frac{1}{m} \sum_{i=1}^m \eta_t^i \nabla f_i (x_t^i) \Big\|^2 &= \Big\| \frac{1}{m} \sum_{i=1}^m (x_t^i - x_{t+1}^i) \Big\|^2 \leq \frac{1}{m} \sum_{i=1}^m \| x_t^i - x_{t+1}^i \|^2.
\end{align}
To bound $\| x_t^i - x_{t+1}^i \|^2,$ observe that
\begin{align}
    \| x_t^i - x_{t+1}^i \|^2 &= 2 \| x_t^i - x_{t+1}^i \|^2 - \| x_t^i - x_{t+1}^i \|^2 \nonumber \\
    &= 2 \langle -\eta_t^i \nabla f_i (x_t^i), x_t^i - x_{t+1}^i \rangle - \| x_t^i - x_{t+1}^i \|^2 \nonumber  \\
    &= 2 \eta_t^i \langle \nabla f_i (x_t^i) - \nabla f_i (x_{t-1}^i), x_{t+1}^i - x_t^i \rangle - \| x_t^i - x_{t+1}^i \|^2  \nonumber \\  
    &\quad+ 2 \eta_t^i \langle  \nabla f_i (x_{t-1}^i), x_{t+1}^i - x_t^i \rangle \label{eq:grad-bound-1}
\end{align}
For the first term in \eqref{eq:grad-bound-1}, using Cauchy-Schwarz and Young's inequalities as well as \\ $\eta_t^i \leq \frac{\|x_t^i - x_{t-1}^i \|}{2 \| \nabla f_i(x_t^i) - \nabla f_i (x_{t-1}^i)  \|}$, we have:
\begin{align*}
    2 \eta_t^i \langle \nabla f_i (x_t^i) - \nabla f_i (x_{t-1}^i), x_{t+1}^i - x_t^i \rangle &\leq 2 \eta_t^i  \| \nabla f_i (x_t^i) - \nabla f_i (x_{t-1}^i) \| \cdot \| x_{t+1}^i - x_t^i  \| \\ 
    &\leq \|  x_t^i - x_{t-1}^i \| \cdot \| x_{t+1}^i - x_t^i  \| \\
    &\leq \frac{1}{2} \|  x_t^i - x_{t-1}^i \|^2 + \frac{1}{2} \| x_{t+1}^i - x_t^i  \|^2.
\end{align*}
For the third term in \eqref{eq:grad-bound-1}, by convexity of $f_i(x),$ we have 
\begin{align*}
    2 \eta_t^i \langle  \nabla f_i (x_{t-1}^i), x_{t+1}^i - x_t^i \rangle 
    &= 2 \tfrac{\eta_t^i}{\eta_{t-1}^i} \langle x_{t-1}^i - x_t^i , x_{t+1}^i - x_t^i \rangle \\
    &= 2 \theta_t^i \eta_t^i \langle x_{t-1}^i - x_t^i, \nabla f_i(x_t^i)  \rangle  \\
    &\leq 2 \theta_t^i \eta_t^i [f_i (x_{t-1}^i) - f_i(x_t^i)].
\end{align*}
Together, we have 
\begin{align} \label{eq:grad-bound-final}
    \frac{1}{m} \sum_{i=1}^m \| x_t^i - x_{t+1}^i \|^2 \leq \frac{1}{m} \sum_{i=1}^m \Big\{ \tfrac{1}{2} \|  x_t^i - x_{t-1}^i \|^2 - \tfrac{1}{2} \| x_{t+1}^i - x_t^i  \|^2 + 2 \theta_t^i \eta_t^i [f_i (x_{t-1}^i) - f_i(x_t^i)] \Big\}.
\end{align} 
Now to bound the last term in \eqref{eq:decomp}, we have
\begin{align}
    - 2 \Big \langle  x_t - x^\star,  \frac{1}{m} \sum_{i=1}^m \eta_t^i \nabla f_i (x_t^i) \Big \rangle 
    &= \frac{2}{m} \eta_t^i \sum_{i=1}^m \Big \langle  x^\star - x_t^i, \nabla f_i (x_t^i) \Big \rangle \nonumber  \\
    &\leq \frac{2}{m} \sum_{i=1}^m \eta_t^i \big(f_i (x^\star) - f_i (x_t^i) \big), \label{eq:inner-term-final}
\end{align}
where in the equality we used the fact that $x_t^i = x_t,$ for $k=1.$

Putting \eqref{eq:grad-bound-final} and \eqref{eq:inner-term-final} back to \eqref{eq:decomp}, we have
\begin{align}
 \| x_{t+1} - x^\star \|^2 
    &= \| x_t - x^\star \|^2 + \Big\| \frac{1}{m} \sum_{i=1}^m \eta_t^i \nabla f_i (x_t^i) \Big\|^2 - 2 \Big \langle  x_t - x^\star,  \frac{1}{m} \sum_{i=1}^m \eta_t^i \nabla f_i (x_t^i) \Big \rangle  \nonumber \\ 
    &\leq \| x_t - x^\star \|^2  
    + \frac{1}{m} \sum_{i=1}^m \Big\{ \tfrac{1}{2} \|  x_t^i - x_{t-1}^i \|^2 - \tfrac{1}{2} \| x_{t+1}^i - x_t^i  \|^2 + 2 \theta_t^i \eta_t^i [f_i (x_{t-1}^i) - f(x_t^i)] \Big\} \nonumber \\
    &\quad\quad 
    - 2 \Big \langle  x_t - x_t^i,  \frac{1}{m} \sum_{i=1}^m \eta_t^i \nabla f_i (x_t^i) \Big \rangle + 2 \frac{1}{m} \sum_{i=1}^m \eta_t^i \big( f_i(x^\star) - f_i(x_t^i) \big). \label{eq:decomp-2}
\end{align}
Averaging \eqref{eq:decomp-2} for $i=1, \dots, m,$ notice that the first term in \eqref{eq:decomp-2} disappears, as $\hat{x_t} = \frac{1}{m} \sum_{i=1}^m x_i.$
\begin{align*}
     &\| x_{t+1} - x^\star \|^2  + \frac{1}{m} \sum_{i=1}^m \Big[ 2 \eta_t^i (1 + \theta_t^i) \big( f_i(x_t^i) - f_i(x^\star) \big) + \frac{1}{2} \| x_{t+1}^i - x_t^i  \|^2 \Big]  \\
     &\hspace{2cm}\leq 
     \| x_t - x^\star \|^2  + \frac{1}{m} \sum_{i=1}^m \Big[ 2 \eta_t^i \theta_t^i \big(  f_i(x_{t-1}^i) - f_i(x^\star) \big) + \frac{1}{2} \| x_t^i - x_{t-1}^i  \|^2 \Big].
\end{align*}
\end{proof}

\begin{theorem} \label{thm:convex-result}
Let $f_i: \mathbb{R}^d \rightarrow \mathbb{R}$ be convex with locally Lipschitz gradients. Then, the sequence $\{x_t\}$ generated by Algorithm~\ref{alg:dist-adap-gd}, assuming $k=1$ and $p=1$ with full batch, satisfy the following:
\begin{align} \label{eq:convex-result}
    \frac{1}{m} \sum_{i=1}^m \big( f_i (\tilde{x}_t^i) - f_i (x^\star) \big) \leq \frac{2 D \hat{L}^2}{2 \hat{L} t+1}, 
\end{align}
where 
\begin{align*}
    \tilde{x}_t^i &= \frac{ \eta_t^i (1+\theta_t^i) x_t^i + \sum_{k=1}^{t-1} \alpha_k^i x_k^i }{S_t^i}, \\
    \alpha_k^i &= \eta_k^i (1+\theta_k^i) - \eta_{k+1}^i \theta_{k+1}^i \\
    S_t^i &= \eta_t^i (1 + \theta_t^i) + \sum_{k=1}^{t-1} \alpha_k^i = \sum_{k=1}^t \eta_k^i + \eta_1^i \theta_1^i,
\end{align*}
and $\hat{L}$ is a constant. Further, if $x^\star$ is any minimum of $f_i(x)$ for all $i \in [m]$, \eqref{eq:convex-result} implies convergence for problem \eqref{eq:obj}.
\end{theorem}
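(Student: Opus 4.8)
The plan is to build directly on the Lyapunov inequality \eqref{eq:lyapunov} just established, which already has the shape of a one-step contraction for the potential $\Phi_t := \|x_t - x^\star\|^2 + \tfrac{1}{2m}\sum_i \|x_t^i - x_{t-1}^i\|^2 + \tfrac{2}{m}\sum_i \eta_t^i\theta_t^i(f_i(x_{t-1}^i) - f_i(x^\star))$. First I would sum \eqref{eq:lyapunov} over the rounds. The quadratic terms $\|x_t - x^\star\|^2$ and $\tfrac{1}{2m}\sum_i \|x_t^i - x_{t-1}^i\|^2$ telescope. The function-value contributions do not telescope verbatim but regroup cleanly: at round $k$ the left side carries weight $\eta_k^i(1+\theta_k^i)$ on $f_i(x_k^i) - f_i(x^\star)$, while the next round's right side carries only $\eta_{k+1}^i\theta_{k+1}^i$ on the same quantity, and the residual weight is exactly $\alpha_k^i = \eta_k^i(1+\theta_k^i) - \eta_{k+1}^i\theta_{k+1}^i$, which is nonnegative by the second step-size condition $\eta_{t+1}^i \le \sqrt{1+\theta_t^i}\,\eta_t^i$ (equivalently $\eta_{t+1}^i\theta_{t+1}^i \le (1+\theta_t^i)\eta_t^i$), precisely as noted right after the lemma. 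Dropping the remaining nonnegative quadratic terms, this yields $\tfrac{2}{m}\sum_i\big[\eta_t^i(1+\theta_t^i)(f_i(x_t^i) - f_i(x^\star)) + \sum_{k=1}^{t-1}\alpha_k^i(f_i(x_k^i) - f_i(x^\star))\big] \le 2D$, where $D$ collects the initial terms (a combination of $\|x_1 - x^\star\|^2$, $\tfrac{1}{m}\sum_i\|x_1^i - x_0^i\|^2$, and $\tfrac{1}{m}\sum_i\eta_1^i\theta_1^i(f_i(x_0^i) - f_i(x^\star))$).

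Next, since all weights $\eta_t^i(1+\theta_t^i)$ and $\alpha_k^i$ are nonnegative and sum to exactly $S_t^i = \sum_{k=1}^t \eta_k^i + \eta_1^i\theta_1^i$ (a short algebraic check matching the statement), convexity of $f_i$ and Jensen's inequality give $S_t^i\big(f_i(\tilde x_t^i) - f_i(x^\star)\big) \le \eta_t^i(1+\theta_t^i)(f_i(x_t^i)-f_i(x^\star)) + \sum_{k=1}^{t-1}\alpha_k^i(f_i(x_k^i)-f_i(x^\star))$, with $\tilde x_t^i$ the weighted average in the theorem. Summing over $i$ and dividing by $2m$ gives $\tfrac{1}{m}\sum_i S_t^i(f_i(\tilde x_t^i) - f_i(x^\star)) \le D$. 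Under the theorem's standing assumptions (in particular $x^\star$ a common minimizer, so each per-client gap is nonnegative), I can factor out $\min_i S_t^i$ and reduce the claim to a uniform lower bound $S_t^i \ge \tfrac{2\hat L t + 1}{2\hat L^2}$.

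Establishing that lower bound is where I expect the real work, and it is the distributed analogue of the key estimate in \citet{malitsky2019adaptive}. Because $\Phi_t$ is nonnegative and nonincreasing, all iterates $\{x_t^i\}$ stay in a bounded set, so each $\nabla f_i$ is Lipschitz there with a finite constant $\hat L_i$; then the first branch of the step size, $\eta_t^i \le \tfrac{\|x_t^i - x_{t-1}^i\|}{2\|\nabla f_i(x_t^i) - \nabla f_i(x_{t-1}^i)\|}$, is bounded below by $\tfrac{1}{2\hat L_i}$, and an induction on $t$ (using $\eta_t^i \le \sqrt{1+\theta_{t-1}^i}\,\eta_{t-1}^i$, so the step size cannot collapse when the min switches branches) shows $\eta_t^i \ge \min\{\eta_0, \tfrac{1}{\sqrt2\,\hat L_i}\}$-type bound. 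Setting $\hat L := \max_i \hat L_i$ and using $S_t^i = \sum_{k=1}^t \eta_k^i + \eta_1^i\theta_1^i \gtrsim t/\hat L$ then yields the desired growth, and plugging back gives $\tfrac{1}{m}\sum_i(f_i(\tilde x_t^i) - f_i(x^\star)) \le \tfrac{2D\hat L^2}{2\hat L t + 1}$. Finally, when $x^\star$ minimizes every $f_i$, each summand $f_i(\tilde x_t^i) - f_i(x^\star)$ is nonnegative, hence each vanishes at rate $\mathcal{O}(1/t)$, which in particular drives the objective in \eqref{eq:obj} to its minimum along the averaged iterates, giving convergence. The telescoping/rearranging and the Jensen step are routine once \eqref{eq:lyapunov} is available; the boundedness-of-iterates-plus-step-size-lower-bound induction is the only substantive obstacle, and it inherits both its idea and its bookkeeping from the centralized proof it extends.
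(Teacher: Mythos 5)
Your proposal follows essentially the same route as the paper's proof: telescope the Lyapunov inequality, use nonnegativity of the weights $\alpha_k^i$ to apply Jensen's inequality with the averaged iterate $\tilde{x}_t^i$, deduce boundedness of the iterates to get a local Lipschitz constant $\hat{L}_i$, lower-bound the step sizes (and hence $S_t^i$ linearly in $t$), and conclude the $\mathcal{O}(1/t)$ rate. The only differences are constant-factor bookkeeping (your claimed lower bound $S_t^i \geq \tfrac{2\hat{L}t+1}{2\hat{L}^2}$ versus the paper's $\tfrac{t}{2\hat{L}} + \tfrac{1}{4\hat{L}^2}$, compensated by where you keep the factor $2$ from the Lyapunov function) and your slightly more careful induction for the step-size lower bound when the $\min$ switches branches, which if anything is a cleaner treatment than the paper's direct assertion $\eta_t^i \geq \tfrac{1}{2\hat{L}_i}$.
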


\begin{proof}
We start by telescoping \eqref{eq:lyapunov}. Then, we arrive at 
\begin{align}
    \| x_{t+1} - x^\star \|^2 &+ \frac{1}{2m} \sum_{i=1}^m   \| x_{t+1}^i - x_t^i  \|^2  + \frac{2}{m} \sum_{i=1}^m \Big[ \eta_t^i (1 + \theta_t^i) \big( f_i(x_t^i) - f_i(x^\star) \big) \nonumber \\  
    &\hspace{3.5cm}+ \sum_{k=1}^{t-1}  \big( \eta_k^i(1+\theta_k^i) - \eta_{k+1}^i \theta_{k+1}^i \big) \big( f_i(x_k^i) - f_i(x^\star) \big)  \Big]  \label{eq:before-jensen} \\
    &\leq \| x_1 - x^\star \|^2 + \frac{1}{2m} \sum_{i=1}^m   \| x_1^i - x_0^i  \|^2 + \frac{2}{m} \sum_{i=1}^m  \eta_1^i \theta_1^i \big( f_i(x_0^i) - f_i(x^\star) \big)  := D. \label{eq:after-telescope}
\end{align}
Observe that \eqref{eq:before-jensen} is nonnegative by the definition of $\eta_t^i,$ implying the iterates $\{x_t^i\}$ are bounded. Now, define the set $\mathcal{S} := \mathrm{conv}(\{ x_0^i, x_1^i, \dots \} )$, which is bounded as the convex hull of bounded points. Therefore, $\nabla f_i$ is Lipschitz continuous on $\mathcal{S}$. That is, there exist $\hat{L}_i$ such that 
\begin{align*}
    \| \nabla f_i (x) - \nabla f_i (y) \| \leq \hat{L}_i \| x - y \|, \quad \forall x, y \in \mathcal{S}.
\end{align*}
Also note that \eqref{eq:before-jensen} is of the form $\alpha_t^i [f_i(x_t^i) - f_i(x^\star)] + \sum_{k=1}^{t-1} \alpha_k^i [f_i(x_k^i) - f_i(x^\star)] = \sum_{k=1}^t \alpha_k^i [f_i(x_k^i) - f_i(x^\star)]$. The sum of the coefficients equals:
\begin{align}
    \sum_{k=1}^t \alpha_k^i = \eta_t^i ( 1 + \theta_t^i) + \sum_{k=1}^{t-1} [ \eta_k^i (1+\theta_k^i) - \eta_{k+1}^i \theta_{k+1}^i ] = \eta_1^i \theta_1^i + \sum_{k=1}^t \eta_k^i := S_t^i.
\end{align}
Therefore, by Jensen's inequality, we have
\begin{align}
    D \geq \| x_1 - x^\star \|^2 + \frac{1}{2m} \sum_{i=1}^m   \| x_1^i - x_0^i  \|^2 + 2S_t^i \big( f_i(\tilde{x}_t^i) - f_i(x^\star) \big)  \geq 2S_t^i \big( f_i(\tilde{x}_t^i) - f_i(x^\star) \big),
\end{align}
which implies $f_i(\tilde{x}_t^i) - f_i(x^\star) \leq \frac{D^i}{2S_t^i}$, where
\begin{align}
    \tilde{x}_t^i &= \frac{ \eta_t^i (1+\theta_t^i) x_t^i + \sum_{k=1}^{t-1} \alpha_k^i x_k^i }{S_t^i}, \nonumber \\
    \alpha_k^i &= \eta_k^i (1+\theta_k^i) - \eta_{k+1}^i \theta_{k+1}^i \nonumber \\
    S_t^i &= \eta_t^i (1 + \theta_t^i) + \sum_{k=1}^{t-1} \alpha_k^i = \sum_{k=1}^t \eta_k^i + \eta_1^i \theta_1^i. \label{eq:s-term}
\end{align}
Now, notice that by definition of $\eta_t^i,$ we have
\begin{align*}
    \eta_t^i = \frac{\| x_t^i - x_{t-1}^i \| }{2 \| \nabla f(x_t^i) - \nabla f(x_{t-1}^i)  \|} \geq \frac{1}{2\hat{L}_i}, \quad \forall i \in [m].
\end{align*}
Also, notice that $\eta_1^i \theta_1^i = \frac{(\eta_1^i)^2}{\eta_0^i} = (\eta_1^i)^2,$ assuming for simplicity that $\eta_0^i = 1.$ 
Thus, going back to \eqref{eq:s-term}, we have 
\begin{align*}
    S_t^i = \sum_{k=1}^t \eta_k^i + \eta_1^i \theta_1^i &\geq \sum_{k=1}^t \frac{1}{2\hat{L}_i} + \frac{1}{4 \hat{L}_i^2} \\ 
    &= \frac{t}{2\hat{L}_i} + \frac{1}{4\hat{L}_i^2} \geq \frac{t}{2 \hat{L}} + \frac{1}{4 \hat{L}^2}, 
\end{align*}
where $\hat{L}  = \max_i \hat{L}_i.$ Finally, we have 
\begin{align*}
    \frac{D}{2} &\geq \frac{1}{m} \sum_{i=1}^m S_t^i \big( f_i(\tilde{x}_t^i) - f_i(x^\star) \big)  \\
    &\geq \frac{1}{m} \sum_{i=1}^m \Big( \frac{t}{2 \hat{L}} + \frac{1}{4 \hat{L}^2} \Big)  \big( f_i(\tilde{x}_t^i) - f_i(x^\star) \big),
\end{align*}
which implies 
\begin{align*}
    \frac{2 D \hat{L}^2}{2 \hat{L} t+1} \geq \frac{1}{m} \sum_{i=1}^m \big( f_i(\tilde{x}_t^i) - f_i(x^\star) \big).
\end{align*}
    
\end{proof}

\section{Additional Experiments and Plots}
\label{apdx:additional-experiments}

\subsection{Ease of tuning}
\label{sec:ease-of-tuning}

In this subsection, we show the effect of using different parameter $\delta$ in the step size of $\Delta$-SGD.
As mentioned in Section~\ref{sec:experiments}, for $\Delta$-SGD, we append $\delta$ in front of the second condition, and hence the step size becomes 
$$\eta_{t,k}^i = \min \Big\{ \tfrac{\gamma \|x_{t,k}^i - x_{t,{k-1}}^i \|}{2 \| \tilde{\nabla} f_i(x_{t,k}^i) - \tilde{\nabla} f_i (x_{t,{k-1}}^i)  \|} , \sqrt{ 1 +  \delta \theta_{t,k-1}^i } \eta_{t,k-1}^i \Big\}.$$
The experimental results presented in Table~\ref{tab:results} were obtained using a value of $\delta=0.1$. For $\gamma$, we remind the readers that we did not change from the default value $\gamma=2$ in the original implementation in \url{https://github.com/ymalitsky/adaptive_GD/blob/master/pytorch/optimizer.py}.

In Figure~\ref{fig:ease_tuning}, we display the final accuracy achieved when using different values of $\delta$ from the set $\{0.01, 0.1, 1\}$. Interestingly, we observe that $\delta$ has very little impact on the final accuracy. This suggests that $\Delta$-SGD is remarkably robust and does not require much tuning, while consistently achieving higher test accuracy compared to other methods in the majority of cases as shown in Table~\ref{tab:results}.

\begin{figure}[H]
    \centering
    \includegraphics[scale=0.32]{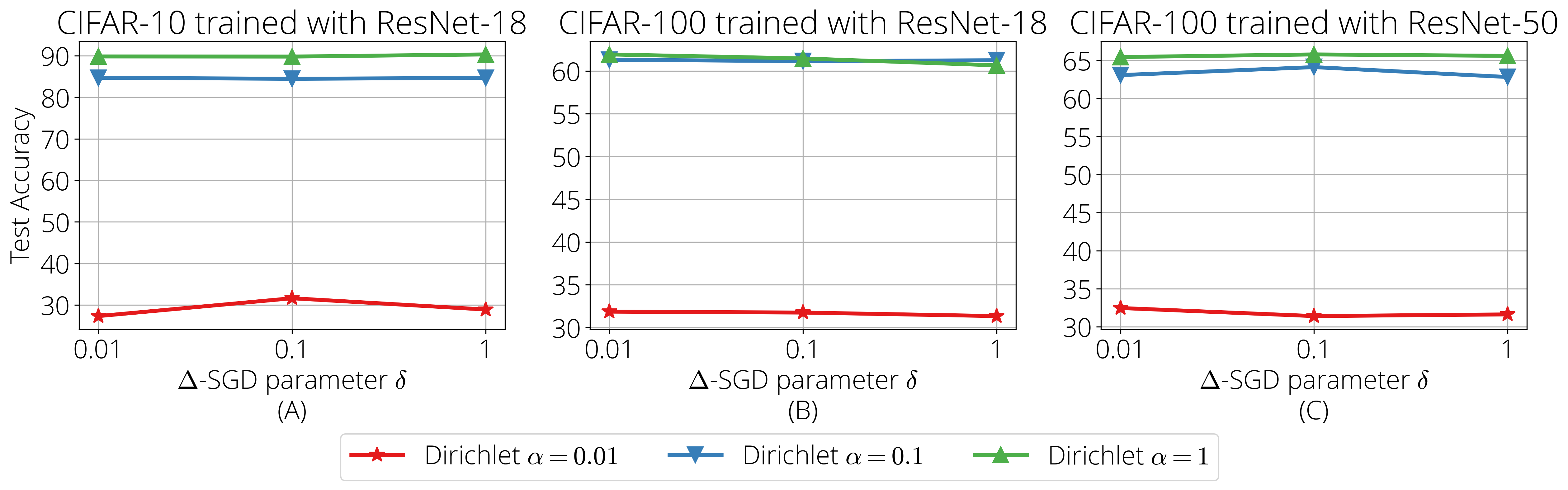}
    \caption{Effect of using different $\delta$ in the second condition of the step size of $\Delta$-SGD.
    }
    \label{fig:ease_tuning}
\end{figure}

\subsection{Effect of different number of local iterations}
\label{sec:diff-local-iter}
In this subsection, we show the effect of the different number of local epochs, while fixing the experimental setup to be the classification of CIFAR-100 dataset, trained with a ResNet-50. We show for all three cases of the Dirichlet concentration parameter $\alpha \in \{0.01, 0.1, 1\}$. The result is shown in Figure~\ref{fig:diff_local_iter}.

As mentioned in the main text, instead of performing local iterations, we instead perform local epochs $E,$ similarly to \citep{reddi2021adaptive, li2020federated}. All the results in Table~\ref{tab:results} use $E=1,$ and in Figure~\ref{fig:diff_local_iter}, we show the results for $E \in \{1, 2, 3\}$. Note that in terms of $E$, the considered numbers might be small difference, but in terms of actual local gradient steps, they differ significantly. 

As can be seen, using higher $E$ leads to slightly faster convergence in all cases of $\alpha.$ However, the speed-up seems marginal compared to the amount of extra computation time it requires (i.e., using $E=2$ takes roughly twice more total wall-clock time than using $E=1$). Put differently, $\Delta$-SGD performs well with only using a single local epoch per client ($E=1$), and still can achieve great performance.   

\begin{figure}[H]
    \centering
    \includegraphics[scale=0.32]{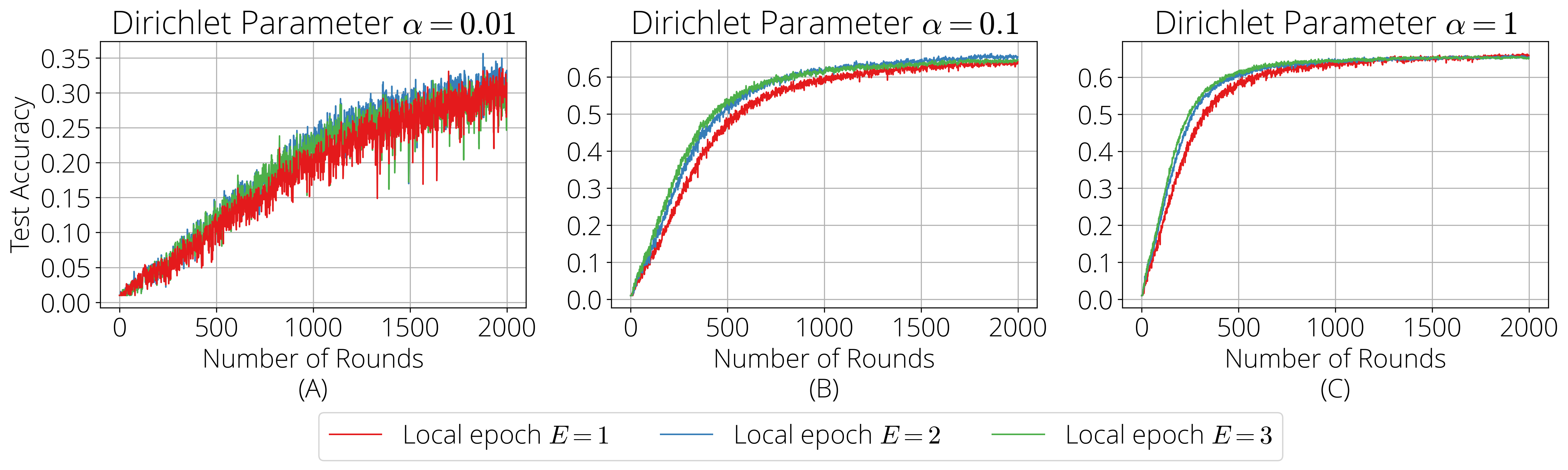}
    \caption{\textit{Effect of the different number of local epochs.}}
    \label{fig:diff_local_iter}
\end{figure}

\subsection{Additional experiments using different number of local data per client}
\label{sec:diff-num-localdata}

\begin{table}[h!]
    \centering
    \begin{tabular}{*4c}    
        \toprule
        \textit{Non-iidness} & \textit{Optimizer} & \multicolumn{2}{c}{\textit{Dataset / Model }} \\
        \midrule
        \multirow{2}{*}{$\text{Dir}(\alpha \cdot \mathbf{p})$} &  & CIFAR-10 & CIFAR-100  \\
        \multicolumn{2}{c}{} &  ResNet-18 & ResNet-50 \\
        \midrule
        \multirow{8}{*}{$\alpha = 0.1$}
        & SGD  & \textcolor{darkspringgreen}{\textbf{80.7}}$_{\downarrow(0.0)}$ & 53.5$_{\textcolor{frenchrose}{\downarrow(4.0)}}$   \\ 
        & SGD ($\downarrow)$  & 78.8$_{\downarrow(1.9)}$ & 53.6$_{\textcolor{frenchrose}{\downarrow(3.9)}}$   \\ 
        & SGDM  & 75.0$_{\textcolor{frenchrose}{\downarrow(5.7)}}$ & 53.9$_{\textcolor{frenchrose}{\downarrow(3.6)}}$   \\ 
        & SGDM ($\downarrow)$  & 66.6$_{\textcolor{frenchrose}{\downarrow(14.1)}}$ & 53.1$_{\textcolor{frenchrose}{\downarrow(4.4)}}$   \\ 
        & Adam  & 79.9$_{\downarrow(0.8)}$ & 51.1$_{\textcolor{frenchrose}{\downarrow(6.4)}}$   \\ 
        & Adagrad  & 79.3$_{\downarrow(1.4)}$ & 44.5$_{\textcolor{frenchrose}{\downarrow(13.0)}}$   \\ 
        & SPS  & 64.4$_{\textcolor{frenchrose}{\downarrow(16.3)}}$ & 37.2$_{\textcolor{frenchrose}{\downarrow(20.3)}}$  \\
        \cmidrule{2-4}
        & $\Delta$-SGD  & \textcolor{darkspringgreen}{\textbf{80.4}}$_{\downarrow(0.3)}$ & \textcolor{darkspringgreen}{\textbf{57.5}}$_{\downarrow(0.0)}$   \\ 
        \bottomrule\\
    \end{tabular}
    \caption{\textit{Experimental results using different number of local data per client}.
    Performance difference within 0.5\% of the best result for each task are shown in \textbf{\textcolor{darkspringgreen}{bold}}. Subscripts$_{\downarrow(x.x)}$ is the performance difference from the best result and is highlighted in \textcolor{frenchrose}{pink} when the difference is bigger than 2\%. The symbol 
    ($\downarrow$) appended to SGD and SGDM indicates step-wise learning rate decay, where the step sizes are divided by $10$ after 50\%, and another by $10$ after 75\% of the total rounds.
    }
    \label{tab:diff-local-data}
\end{table}

In this subsection, we provide additional experimental results, where the size of the local dataset each client has differs. In particular, instead of distributing $500$ samples to each client as done in Section~\ref{sec:experiments} of the main text, we distribute $n_i \in [100, 500]$ samples for each client $i$, where $n_i$ is the number of samples that client $i$ has, and is randomly sampled from the range $[100, 500]$. We keep all other settings the same. 

Then, in the server aggregation step (line 13 of Algorithm~\ref{alg:dist-adap-gd}), we compute the weighted average of the local parameters, instead of the plain average, as suggested in \citep{mcmahan2017communication}, and used for instance in \citep{reddi2021adaptive}.

We experiment in two settings: CIFAR-10 dataset classification trained with a ResNet-18 \citep{he2016deep}, with Dirichlet concentration parameter $\alpha=0.1,$ which was the setting where the algorithms were fined-tuned using grid search; please see details in Section~\ref{sec:experiments}. We also provide results for CIFAR-100 dataset classification trained with a ResNet-50, with the same $\alpha.$

The result can be found in Table~\ref{tab:diff-local-data}. Similarly to the case with same number of data per client, $i.e., n_i = 500$ for all $i \in [m],$ we see that $\Delta$-SGD achieves good performance without any additional tuning. In particular, for CIFAR-10 classification trained with a ResNet-18, $\Delta$-SGD achieves the second best performance after SGD (without LR decay), with a small margin. Interestingly, SGDM (both with and without LR decay) perform much worse than SGD.
This contrasts with the same setting in Table~\ref{tab:results}, where SGDM performed better than SGD, which again complicates how one should tune client optimizers in the FL setting.
For CIFAR-100 classification trained with ResNet-50, $\Delta$-SGD outperforms all other methods with large margin.

\subsection{Additional experiments using FedAdam}
\label{sec:fedadam-additional}

As mentioned in Section~\ref{sec:prelim} of the main text, since our work focuses on \textit{client adaptivity}, our method can seamlessly be combined with server-side adaptivity like FedAdam \citep{reddi2021adaptive}. Here, we provide additional experiments using the FedAdam server optimizer.

A big downside of the adaptive server-side approach like FedAdam is that it adds at least one more hyperparameter to tune: the step size for the global optimizer, set aside the two ``momentum'' parameters ($\beta_1$ and $\beta_2$) in the case of Adam. This necessity to tune additional parameters is quite orthogonal to the approach we take, where we try to \textit{remove the need for step size tuning.}

Indeed, as can be seen in \cite{reddi2021adaptive}[Appendix D.2], 9 different server-side global optimizer step size is grid-searched, and \cite{reddi2021adaptive}[Table 8] shows the best performing server learning rate differs for each task.

That being said, we simply used the default step size and momentum values of Adam (from \texttt{Pytorch}), to showcase that our method can seamlessly combined with FedAdam. Due to time constraints, we tested up to CIFAR-100 trained with a ResNet-18, so that all datasets are covered. The result can be found below.

For three out of four cases, $\Delta$-SGD equipped with Adam server-side optimizer achieves the best accuracy with large margin. For the last case, CIFAR-100 trained with a ResNet-18, SGD with decaying step size achieves noticeably better accuracy than $\Delta$-SGD; however, we refer the readers to Table~\ref{tab:results} where for the same setting, $\Delta$-SGD equipped with simple averaging achieves $61.1\%$ accuracy, \textit{without any tuning}.

\begin{table}[H]
\small
    \centering
\begin{tabular}{*6c}    
    \toprule
    \multicolumn{6}{c}{Additional experiments using FedAdam \citep{reddi2021adaptive}} \\
    \midrule
    \textit{Non-iidness} & \textit{Optimizer} & \multicolumn{4}{c}{\textit{Dataset / Model }} \\
    \midrule
    \multirow{2}{*}{$\text{Dir}(\alpha \cdot \mathbf{p})$} &  & MNIST & FMNIST & CIFAR-10 & CIFAR-100 \\
    \multicolumn{2}{c}{} & CNN & CNN & ResNet-18 & ResNet-18 \\
    \midrule
    \multirow{8}{*}{$\alpha = 0.1$}      
    & SGD  & $97.3_{\downarrow(0.6)}$ & $83.7_{\textcolor{frenchrose}{\downarrow(2.1)}}$ &  $52.0_{\textcolor{frenchrose}{\downarrow(11.8)}}$ & $46.7_{\textcolor{frenchrose}{\downarrow(2.5)}}$ \\ 
    & SGD ($\downarrow)$  & $96.4_{\downarrow(1.4)}$ & $80.9_{\textcolor{frenchrose}{\downarrow(4.9)}}$ & $49.1_{\textcolor{frenchrose}{\downarrow(14.7)}}$ & $\textcolor{darkspringgreen}{\textbf{49.2}}_{\downarrow(0.0)}$ \\ 
    & SGDM  & $\textcolor{darkspringgreen}{\textbf{97.5}}_{\downarrow(0.4)}$ & $84.6_{\downarrow(1.2)}$ & $53.7_{\textcolor{frenchrose}{\downarrow(10.1)}}$ & $13.3_{\textcolor{frenchrose}{\downarrow(35.9)}}$ \\ 
    & SGDM ($\downarrow)$  & $96.4_{\downarrow(1.5)}$ & $81.8_{\textcolor{frenchrose}{\downarrow(4.0)}}$ & $53.3_{\textcolor{frenchrose}{\downarrow(10.5)}}$ & $16.8_{\textcolor{frenchrose}{\downarrow(32.4)}}$  \\ 
    & Adam   & $96.4_{\downarrow(1.5)}$ & $81.5_{\textcolor{frenchrose}{\downarrow(4.3)}}$ & $27.8_{\textcolor{frenchrose}{\downarrow(36.0)}}$  & $38.3_{\textcolor{frenchrose}{\downarrow(10.9)}}$ \\ 
    & Adagrad  & $95.7_{\textcolor{frenchrose}{\downarrow(2.2)}}$ & $82.1_{\textcolor{frenchrose}{\downarrow(3.7)}}$ & $10.4_{\textcolor{frenchrose}{\downarrow(53.4)}}$ & $1.0_{\textcolor{frenchrose}{\downarrow(48.2)}}$ \\ 
    & SPS  & $96.6_{\downarrow(1.3)}$ & $85.0_{\downarrow(0.8)}$ & $21.6_{\textcolor{frenchrose}{\downarrow(42.2)}}$  & $1.6_{\textcolor{frenchrose}{\downarrow(47.6)}}$ \\
    \cmidrule{2-6}
    & $\Delta$-SGD  & $\textcolor{darkspringgreen}{\textbf{97.9}}_{\downarrow(0.0)}$ & $\textcolor{darkspringgreen}{\textbf{85.8}}_{\downarrow(0.0)}$ & $\textcolor{darkspringgreen}{\textbf{63.8}}_{\downarrow(0.0)}$ & $41.9_{\textcolor{frenchrose}{\downarrow(7.3)}}$ \\ 
    \bottomrule
\end{tabular} 
    \caption{ \textit{Additional experiments using FedAdam \citep{reddi2021adaptive}. } 
    }
    \label{tab:fedadam}
\end{table}

\subsection{Additional experiments using MOON}
\label{sec:moon}

Here, we provide additional experimental results using MOON \citep{li2021model}, which utilizes model-contrastive learning to handle heterogeneous data. In this sense, the goal of MOON is very similar to that of FedProx \citep{li2020federated}; yet, due to the model-contrastive learning nature, it can incur bigger memory overhead compared to FedProx, as one needs to keep track of the previous model(s) to compute the representation of each local batch from the local model of last round, as well as the global model. 

As such, we only ran using MOON for CIFAR-10 classification using a ResNet-18; for the CIFAR-100 classification using a ResNet-50, our computing resource ran out of memory to run the same configuration we ran in the main text.

The results are summarized in Table~\ref{tab:moon}. As can be seen, 
even ignoring the fact that MOON utilizes more memory, MOON does not help for most of the optimizers--it actually often results in worse final accuracies than the corresponding results in Table~\ref{tab:results} from the main text. 

\begin{table}[h!]
\small
    \centering
    \begin{tabular}{*3c}    
        \toprule
        \textit{Non-iidness} & \textit{Optimizer} & \textit{Dataset / Model } \\
        \midrule
        \multirow{2}{*}{$\text{Dir}(\alpha \cdot \mathbf{p})$} &  & CIFAR-10  \\
        \multicolumn{2}{c}{} &  ResNet-18  \\
        \midrule
        \multirow{8}{*}{$\alpha = 0.1$}
        & SGD  & 78.2$_{\textcolor{frenchrose}{\downarrow(4.9)}}$  \\ 
        & SGD ($\downarrow)$  & 74.2$_{\textcolor{frenchrose}{\downarrow(8.9)}}$  \\ 
        & SGDM  & 76.4$_{\textcolor{frenchrose}{\downarrow(6.7)}}$  \\ 
        & SGDM ($\downarrow)$  & 75.5$_{\textcolor{frenchrose}{\downarrow(14.1)}}$  \\ 
        & Adam  & 82.4$_{\downarrow(0.6)}$ \\ 
        & Adagrad  & 81.3$_{\downarrow(1.8)}$  \\ 
        & SPS  & 9.57$_{\textcolor{frenchrose}{\downarrow(73.5)}}$  \\
        \cmidrule{2-3}
        & $\Delta$-SGD  & \textcolor{darkspringgreen}{\textbf{83.1}}$_{\downarrow(0.0)}$   \\ 
        \bottomrule\\
    \end{tabular}
    \caption{\textit{Experimental results using MOON loss function}.
    Performance differences within 0.5\% of the best result for each task are shown in \textbf{\textcolor{darkspringgreen}{bold}}. Subscripts$_{\downarrow(x.x)}$ is the performance difference from the best result and is highlighted in \textcolor{frenchrose}{pink} when the difference is bigger than 2\%. The symbol 
    ($\downarrow$) appended to SGD and SGDM indicates step-wise learning rate decay, where the step sizes are divided by $10$ after 50\%, and another by $10$ after 75\% of the total rounds.
    }
    \label{tab:moon}
\end{table}

\subsection{Additional experiments using FedProx}
\label{sec:fedprox-additional}

Here, we provide the complete result of Table~\ref{tab:fedprox}, conducting the experiment using FedProx loss function for the most heterogeneous case ($\alpha=0.01$) for all datasets and architecture considered. The result can be found below, where $\Delta$-SGD achieves the best accuracy in three out of five cases (FMNIST trained with a CNN, CIFAR-10 trained with a ResNet-18, and CIFAR-100 with ResNet-50); in the remaining two cases (MNIST trained with a CNN and CIFAR-100 trained with a ResNet-18), $\Delta$- SGD achieves second-best accuracy with close margin to the best.

\begin{table}[h!]
\small
    \centering
    \begin{tabular}{*7c}    
        \toprule
        \multicolumn{7}{c}{Additional experiments using FedProx loss function \citep{li2020federated}} \\
        \midrule
        \textit{Non-iidness} & \textit{Optimizer} & \multicolumn{5}{c}{\textit{Dataset / Model }} \\
        \midrule
        \multirow{2}{*}{$\text{Dir}(\alpha \cdot \mathbf{p})$} &  & MNIST & FMNIST & CIFAR-10 & CIFAR-100 & CIFAR-100 \\
        \multicolumn{2}{c}{} & CNN & CNN & ResNet-18 & ResNet-18 & ResNet-50 \\
        \midrule
        \multirow{8}{*}{$\alpha = 0.01$}      
        & SGD  & 95.7$_{\downarrow(1.4)}$ & 79.0$_{\downarrow(1.6)}$ &  20.0$_{\textcolor{frenchrose}{\downarrow(13.8)}}$ & \textcolor{darkspringgreen}{\textbf{30.3}}$_{\downarrow(0.0)}$ & 25.2$_{\textcolor{frenchrose}{\downarrow(5.9)}}$   \\ 
        & SGD ($\downarrow)$  & \textcolor{darkspringgreen}{\textbf{97.2}}$_{\downarrow(0.0)}$ & 79.3$_{\downarrow(1.3)}$ & 31.3$_{\textcolor{frenchrose}{\downarrow(2.5)}}$ & 29.5$_{\downarrow(0.8)}$ & 20.2$_{\textcolor{frenchrose}{\downarrow(10.8)}}$   \\ 
        & SGDM  & 73.8$_{\textcolor{frenchrose}{\downarrow(23.4)}}$ & 72.8$_{\textcolor{frenchrose}{\downarrow(7.8)}}$ & 29.3$_{\textcolor{frenchrose}{\downarrow(4.4)}}$ & 22.9$_{\textcolor{frenchrose}{\downarrow(7.5)}}$ & 23.8$_{\textcolor{frenchrose}{\downarrow(7.2)}}$   \\ 
        & SGDM ($\downarrow)$  & 81.2$_{\textcolor{frenchrose}{\downarrow(16.0)}}$ & 78.0$_{\textcolor{frenchrose}{\downarrow(2.6)}}$ & 25.3$_{\textcolor{frenchrose}{\downarrow(8.5)}}$ & 19.8$_{\textcolor{frenchrose}{\downarrow(10.5)}}$ & 15.0$_{\textcolor{frenchrose}{\downarrow(16.0)}}$   \\ 
        & Adam   & 82.3$_{\textcolor{frenchrose}{\downarrow(14.8)}}$ & 65.6$_{\textcolor{frenchrose}{\downarrow(15.0)}}$ & 28.1$_{\textcolor{frenchrose}{\downarrow(5.7)}}$ & 24.8$_{\textcolor{frenchrose}{\downarrow(5.6)}}$ & 22.6$_{\textcolor{frenchrose}{\downarrow(8.4)}}$   \\ 
        & Adagrad  & 76.3$_{\textcolor{frenchrose}{\downarrow(20.9)}}$ & 51.2$_{\textcolor{frenchrose}{\downarrow(29.4)}}$ & 19.3$_{\textcolor{frenchrose}{\downarrow(14.5)}}$ & 12.9$_{\textcolor{frenchrose}{\downarrow(17.4)}}$ & 4.1$_{\textcolor{frenchrose}{\downarrow(26.9)}}$   \\ 
        & SPS  & 85.5$_{\textcolor{frenchrose}{\downarrow(11.7)}}$ & 62.1$_{\textcolor{frenchrose}{\downarrow(18.45)}}$ & 27.6$_{\textcolor{frenchrose}{\downarrow(6.2)}}$ & 21.3$_{\textcolor{frenchrose}{\downarrow(9.0)}}$ & 16.5$_{\textcolor{frenchrose}{\downarrow(14.5)}}$  \\
        \cmidrule{2-7}
        & $\Delta$-SGD  & \textcolor{darkspringgreen}{\textbf{96.9}}$_{\downarrow(0.26)}$ & \textcolor{darkspringgreen}{\textbf{80.6}}$_{\downarrow(0.0)}$ & \textcolor{darkspringgreen}{\textbf{33.8}}$_{\downarrow(0.0)}$ & 29.7$_{\downarrow(0.6)}$ & \textcolor{darkspringgreen}{\textbf{31.0}}$_{\downarrow(0.0)}$   \\ 
        \bottomrule\\
    \end{tabular}
    \caption{\textit{Additional experiments using FedProx loss function.}  
    }
    \label{tab:fedprox-additional}
\end{table}

\subsection{Additional plot with standard deviation (3 random seeds)}
\label{sec:plot-standard-dev}

In this section, we repeat for three independent trials (using three random seeds) for two tasks: CIFAR-10 classification trained with a ResNet-18, and FMNIST classification trained with a shallow CNN, both with Dirichlet $\alpha=0.1,$. The aim is to study the robustness of $\Delta$-SGD in comparison with other client optimizers. 

We remind the readers that CIFAR-10 classification trained with a ResNet-18 with Dirichlet $\alpha=0.1$ is where we performed grid-search for each client optimizer, and thus each are using the best step size that we tried. Then, for FMNIST classification, the same step sizes from the previous task are used.
The result can be found below.

We see that $\Delta$-SGD not only achieves \emph{the best average test accuracy}, but also achieves \emph{very small standard deviation}. Critically, none of the other client optimizers achieve good performance in \emph{both settings}, other than the exception of $\Delta$-SGD.

\begin{figure}[H]
    \centering
    \includegraphics[scale=0.3]{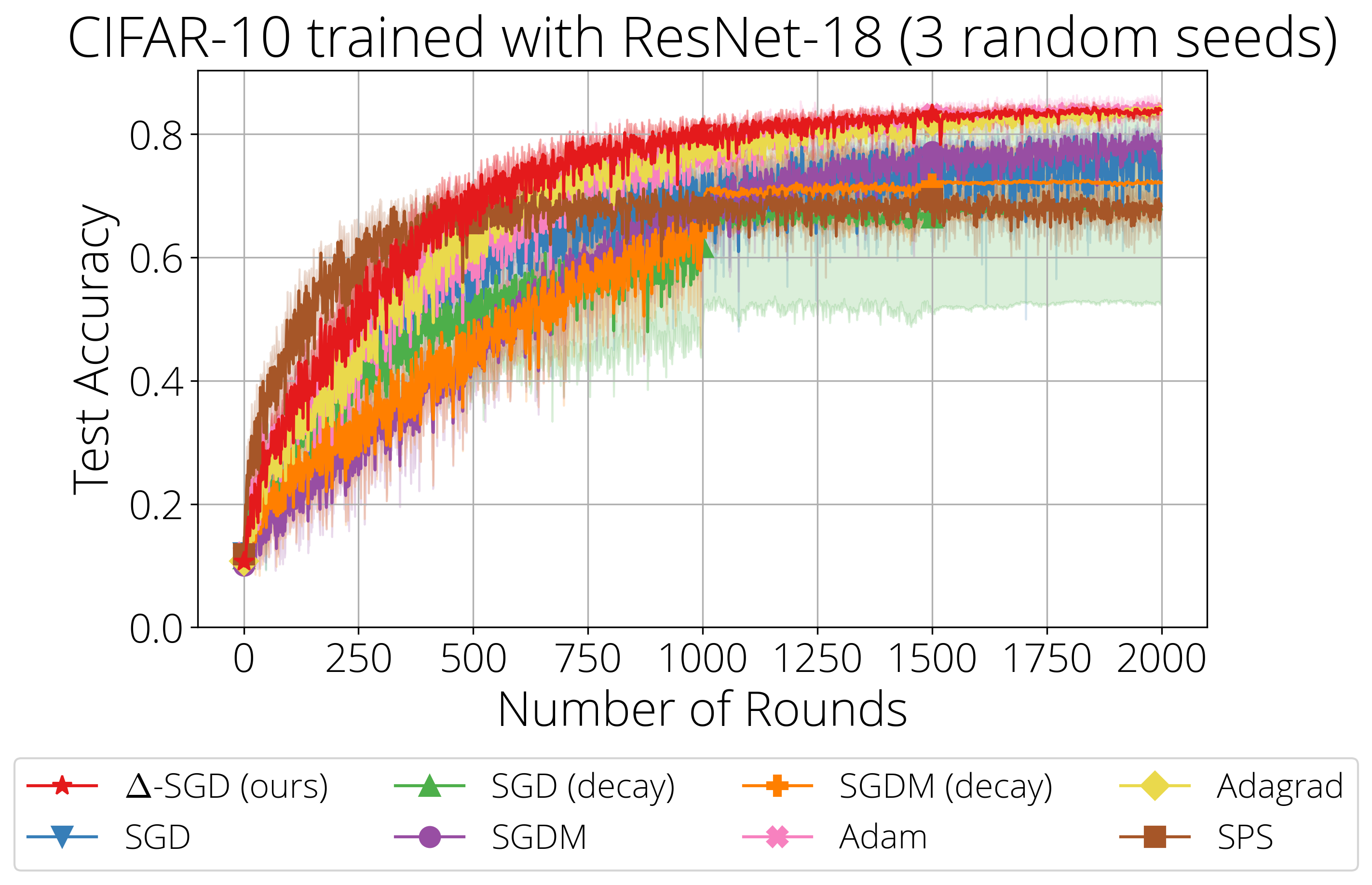}
    \includegraphics[scale=0.3]{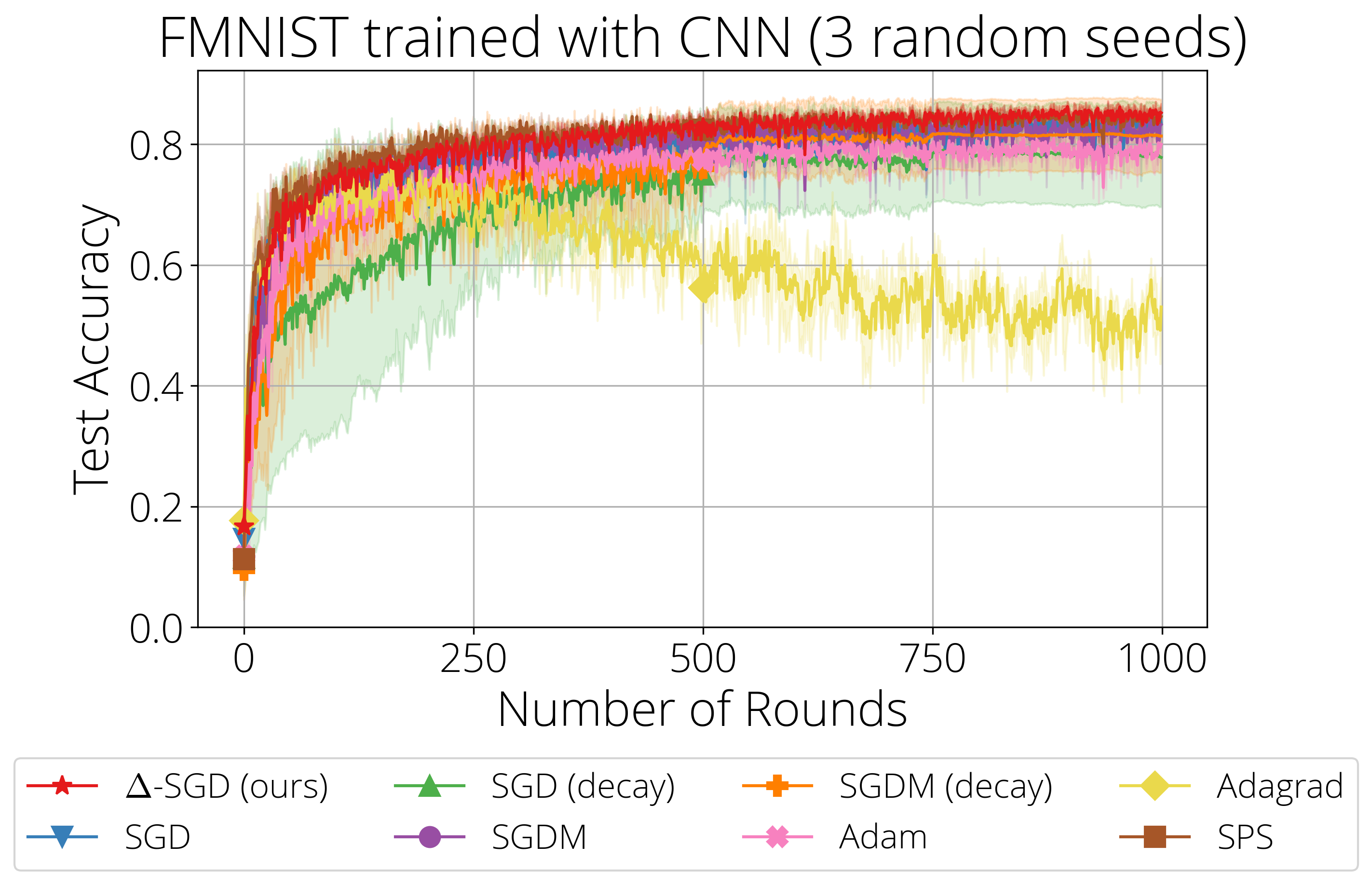}
    \includegraphics[scale=0.3]{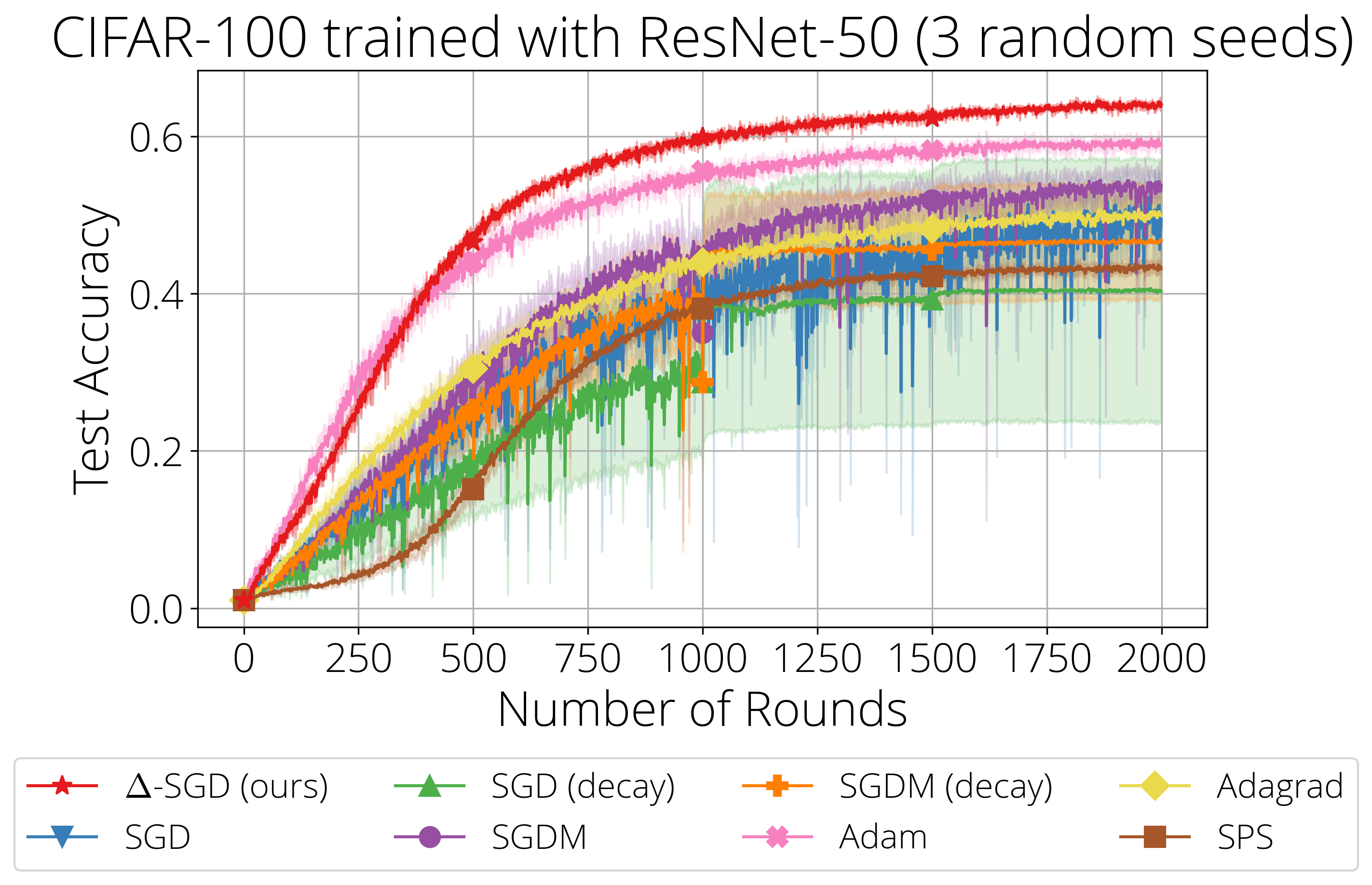}
    \caption{\textit{Additional plots for CIRAR-10 classification trained with a ResNet-18, FMNIST classification trained with a CNN, and CIFAR-100 classification trained with a ResNet-50, repeated for three independent trials.} The average is plotted, with shaded area indicating the standard deviation.
    }
    \label{fig:standard-dev}
\end{figure}

\begin{table}[h!]
    \small
    \centering
    \resizebox{\textwidth}{!}{
    \begin{tabular}{*8c}    
        \toprule
        \textit{Non-iidness} & \textit{Optimizer} & \multicolumn{6}{c}{\textit{Dataset / Model }} \\
        \midrule
        \multirow{1}{*}{$\text{Dir}(\alpha \cdot \mathbf{p})$} &  & \multicolumn{2}{c}{CIFAR-10/ResNet-18} & \multicolumn{2}{c}{FMNIST/CNN} & \multicolumn{2}{c}{CIFAR-100/ResNet-50}  \\
        \midrule
        & &  Average & Std. Dev. &  Average & Std. Dev.  &  Average & Std. Dev. \\
        \midrule
        \multirow{8}{*}{$\alpha = 0.1$}
        & SGD  & 73.96$_{\downarrow(9.93)}$ & 0.0263 & 83.43$_{\downarrow(1.78)}$ & 0.0178 & 49.44$_{\downarrow(14.58)}$ & 0.0376 \\ 
        & SGD ($\downarrow)$  & 67.79$_{\downarrow(16.10)}$ & 0.1550 & 77.83$_{\downarrow(7.38)}$ & 0.0815 & 40.26$_{\downarrow(23.76)}$ & 0.1687 \\ 
        & SGDM  & 77.59$_{\downarrow(6.30)}$ & 0.0142 & 83.27$_{\downarrow(1.94)}$ & 0.0186 & 53.80$_{\downarrow(10.22)}$ & 0.0149 \\ 
        & SGDM ($\downarrow)$  & 72.11$_{\downarrow(11.78)}$ & 0.0611 & 81.36$_{\downarrow(3.85)}$ & 0.0606 & 46.77$_{\downarrow(17.25)}$ & 0.0762 \\ 
        & Adam  & 83.24$_{\downarrow(0.65)}$ & 0.0126 & 79.98$_{\downarrow(5.23)}$ & 0.0228 & 58.89$_{\downarrow(5.13)}$ & 0.0098 \\ 
        & Adagrad  & 83.86$_{\downarrow(0.03)}$ & 0.0048  & 53.06$_{\downarrow(32.15)}$ & 0.0487 & 49.93$_{\downarrow(14.09)}$ & 0.0177 \\ 
        & SPS  & 68.38$_{\downarrow(15.51)}$ & 0.0192 & 84.59$_{\downarrow(0.62)}$ & 0.0154 & 43.16$_{\downarrow(20.86)}$ & 0.0072 \\
        \cmidrule{2-8}
        & $\Delta$-SGD  & 83.89$_{\downarrow(0.00)}$ & 0.0052  & 85.21$_{\downarrow(0.00)}$ & 0.0152 & 64.02$_{\downarrow(0.00)}$ & 0.0051 \\ 
        \bottomrule\\
    \end{tabular}
    }
    \caption{\textit{CIRAR-10 classification trained with a ResNet-18 and FMNIST classification trained with a CNN, and CIFAR-100 classification trained with a ResNet-50, repeated for three independent trials.} The average among three trials and the standard deviation for each client optimizer are reported.
    }
    \label{tab:standard-dev}
\end{table}

\subsection{Step size plot for $\Delta$-SGD}
\label{sec:step-size-plot}

Here, we visualize the step size conditions for $\Delta$-SGD to see how the proposed step size looks like in practice. We only plot the first 300 epochs, as otherwise the plot gets quite messy.

From Figure~\ref{fig:step-size-plot}, we can see that both conditions for $\eta_{t,k}^i$ are indeed necessary. The first condition, plotted in green, approximates the local smoothness of $f_i,$ but can get quite oscillatory. The second condition, plotted in blue, effectively restricts the first condition from taking too large values.  

\begin{figure}[H]
    \centering
    \includegraphics[scale=0.4]{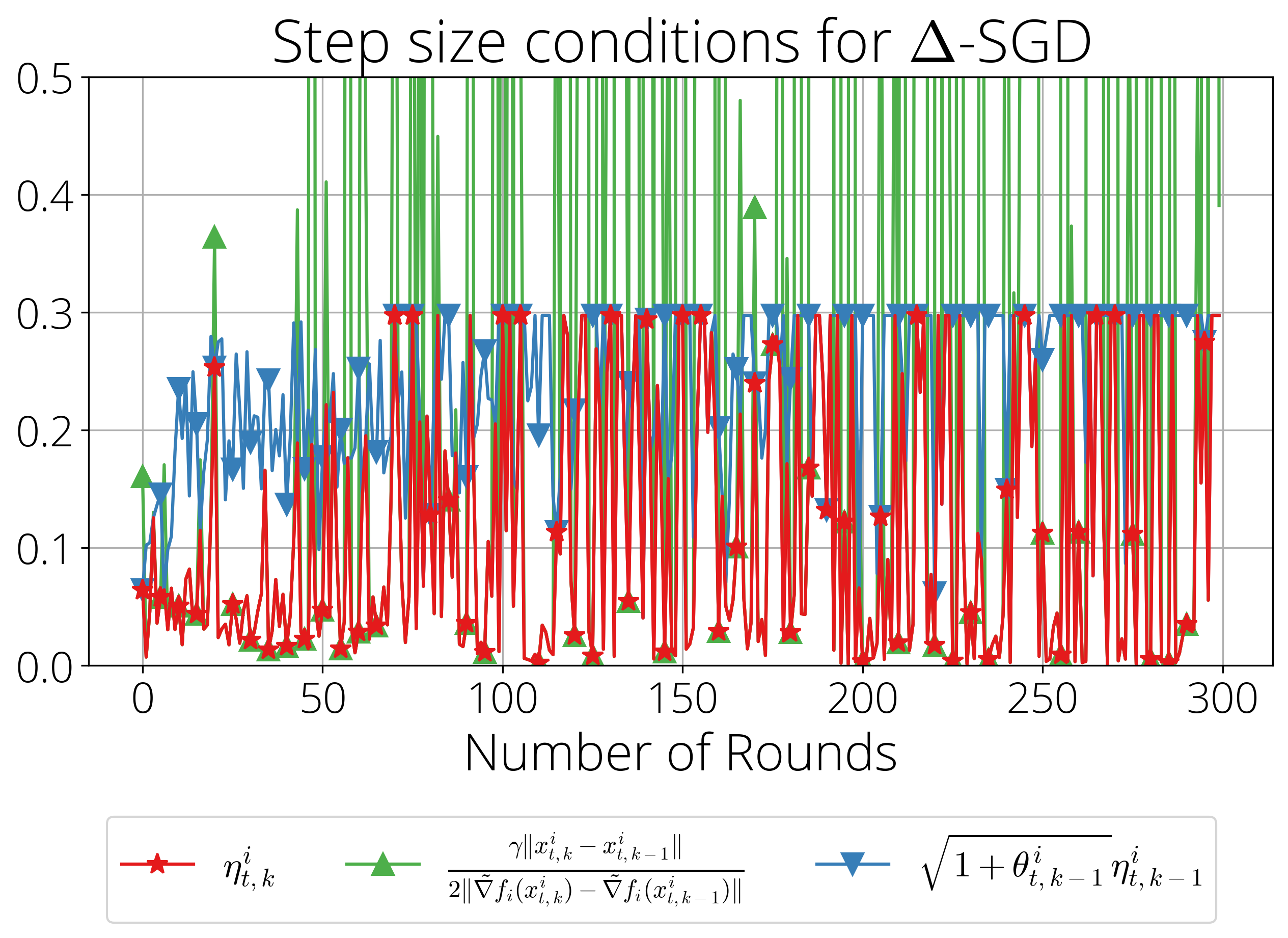}
    \caption{\textit{Step size conditions for $\Delta$-SGD.} In green, we plot the first condition, and in blue, we plot the second condition for $\eta_{t, k}^i$, which is plotted in red color taking the minimum of the two conditions.
    }
    \label{fig:step-size-plot}
\end{figure}

\subsection{Illustration of the degree of heterogeneity}
\label{sec:diff-alpha-plots}

As mentioned in the main text, the level of ``non-iidness" is controlled using latent Dirichlet allocation (LDA) applied to the labels. This approach is based on \citep{hsu2019measuring} and involves assigning each client a multinomial distribution over the labels. This distribution determines the labels from which the client's training examples are drawn. Specifically, the local training examples for each client are sampled from a categorical distribution over $N$ classes, parameterized by a vector $\mathbf{q}$. The vector $\mathbf{q}$ is drawn from a Dirichlet distribution with a concentration parameter $\alpha$ times a prior distribution vector $\mathbf{p}$ over $N$ classes.

To vary the level of ``non-iidness," different values of $\alpha$ are considered: $0.01, 0.1$, and $1$. A larger $\alpha$ corresponds to settings that are closer to i.i.d. scenarios, while smaller values introduce more diversity and non-iidness among the clients, as visualized in Figure~\ref{fig:diff-noniid}. As can be seen, for $\alpha=1,$ each client (each row on $y$-axis) have fairly diverse classes, similarly to the i.i.d. case; as we decrease $\alpha,$ the number of classes that each client has access to decreases. With $\alpha=0.01,$ there are man y clients who only have access to a single or a couple classes.

\begin{figure}[H]
    \centering
    \includegraphics[scale=0.26]{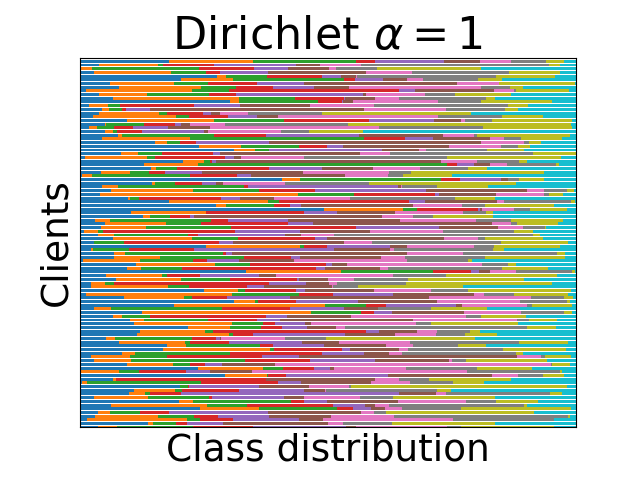}
    \includegraphics[scale=0.26]{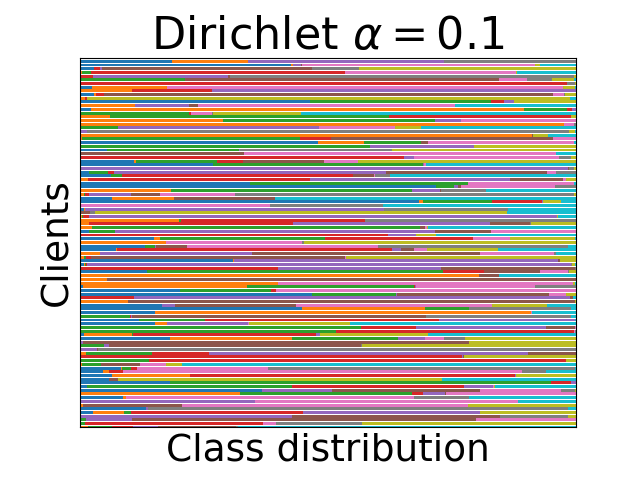}
    \includegraphics[scale=0.26]{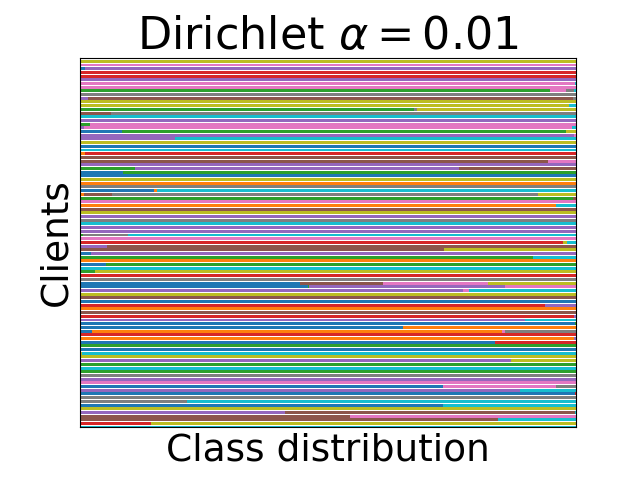}
    \caption{\textit{Illustration of the degree of heterogeneity induced by using different concentration parameter $\alpha$ for Dirichlet distribution, for CIFAR-10 dataset (10 colors) and 100 clients (100 rows on $y$-axis).}}
    \label{fig:diff-noniid}
\end{figure}

\end{document}